\lstdefinestyle{rampdepython}{
  language=Python,
  basicstyle=\ttfamily\small,
  keywordstyle=\bfseries,
  commentstyle=\itshape,
  showstringspaces=false,
  columns=fullflexible,
  keepspaces=true,
  frame=single,
  xleftmargin=0pt,
  xrightmargin=0pt,
  aboveskip=0pt,
  belowskip=0pt,
}
\crefname{hypothesis}{Hypothesis}{Hypotheses}
\crefname{fact}{Fact}{Facts}
\title{Mixed Precision Training of Neural ODEs\thanks{Submitted to the editors November 12, 2025.
\funding{LR's and NTY's work was supported by the Office of Naval Research award N00014-24-1-2221/ P00003 and also in part by NSF award DMS 2038118. The project was supported by the Horizon Europe, MSCA-SE project 101131557 (REMODEL).}}}
\author{Elena Celledoni\thanks{NTNU – Norwegian University of Science and Technology, Trondheim, Norway (\email{elena.celledoni@ntnu.no}, \url{https://www.ntnu.edu/employees/elena.celledoni})} \and Brynjulf Owren\footnotemark[1] \thanks{(\email{brynjulf.owren@ntnu.no}, \url{https://www.ntnu.edu/employees/brynjulf.owren})} \and Lars Ruthotto \thanks{Emory University, Atlanta, GA, USA (\email{lruthotto@emory.edu}, \url{https://www.math.emory.edu/\string~lruthot/})} \and Nicole Tianjiao Yang\thanks{University of Tennessee, Knoxville, TN, USA (\email{nicole.yang@utk.edu}, \url{https://nicoletyang.github.io})}
}
\DeclareMathOperator{\diag}{diag}
\definecolor{exactcolor}{RGB}{0,0,0}      
\definecolor{roundedcolor}{RGB}{0,0,0}    
\newcommand{\Phiexact}{\textcolor{exactcolor}{\Phi}}              
\newcommand{\Phirounded}{\textcolor{roundedcolor}{\tilde{\Phi}}}    
\newcommand{\Cexact}{\textcolor{exactcolor}{C}}                   
\newcommand{\DPhiexact}{\textcolor{exactcolor}{D\Phi}}           
\newcommand{\DPhirounded}{\textcolor{roundedcolor}{D\tilde{\Phi}}} 
\newcommand{\nablaCexact}{\textcolor{exactcolor}{\nabla C}}      
\newcommand{\nablaCreounded}{\textcolor{roundedcolor}{\nabla \tilde{C}}}  
\newcommand{\bfyexact}[1]{\textcolor{exactcolor}{\bfy_{#1}}}     
\newcommand{\bfyrounded}[1]{\textcolor{roundedcolor}{\tilde{\bfy}_{#1}}}  
\newcommand{\bfaexact}[1]{\textcolor{exactcolor}{\bfa_{#1}}}     
\newcommand{\bfarounded}[1]{\textcolor{roundedcolor}{\tilde{\bfa}_{#1}}}  
\newcommand{\bfxrounded}[1]{\textcolor{roundedcolor}{\tilde{\bfx}_{#1}}}
\newcommand{\R}{\mathbb{R}}
\newcommand{\E}{\mathbb{E}}
\newcommand{\bfx}{\mathbf{x}}
\newcommand{\bfy}{\mathbf{y}}
\newcommand{\bfW}{\mathbf{W}}
\newcommand{\bfI}{\mathbf{I}}
\newcommand{\bfJ}{\mathbf{J}}
\newcommand{\bfb}{\mathbf{b}}
\newcommand{\bft}{\mathbf{t}}
\newcommand{\bfa}{\mathbf{a}}
\newcommand{\bfe}{\mathbf{e}}
\newcommand{\bfg}{\mathbf{g}}
\newcommand{\bfz}{\mathbf{z}}
\newcommand{\bfr}{\mathbf{r}}
\newcommand{\bfU}{\mathbf{U}}
\newcommand{\bfone}{\mathbf{1}}
\newcommand{\bfdelta}{\boldsymbol{\delta}}
\newcommand{\bfK}{\mathbf{K}}
\newcommand{\bfM}{\mathbf{M}}
\newcommand{\bfP}{\mathbf{P}}
\newcommand{\bftheta}{\boldsymbol{\theta}}
\newcommand{\uhigh}{u_{\rm H}}
\newcommand{\ulow}{u_{\rm L}}
\newtheorem{example}[theorem]{Example}
\newtheorem{assumption}[theorem]{Assumption}
\begin{document}

\maketitle

\begin{abstract}
Exploiting low-precision computations has become a standard strategy in deep learning to address the growing computational costs imposed by ever larger models and datasets.
However, na{\"i}vely performing all computations in low precision can lead to roundoff errors and instabilities.
Therefore, mixed precision training schemes usually store the weights in high precision and use low-precision computations only for whitelisted operations.
Despite their success,  these principles are currently not reliable for training continuous-time architectures such as neural ordinary differential equations (Neural ODEs).
This paper presents a mixed precision training framework for neural ODEs consisting of explicit ODE solvers and a custom backpropagation scheme and shows their effectiveness in a range of learning tasks.
Our scheme uses low-precision computations for evaluating the velocity, parameterized by the neural network, and for storing intermediate states, while numerical reliability is provided by custom dynamic adjoint scaling and by accumulating the solution and gradients in higher precision.
These contributions address two key challenges in training neural ODEs: the computational cost of repeated network evaluations and the growth of memory requirements with the number of time steps or layers.
Along with the paper we publish our extendable, open-source PyTorch package \texttt{rampde}, whose syntax resembles that of leading packages to provide a drop-in replacement in existing codes.
We demonstrate the reliability and effectiveness of our scheme using challenging test cases and on neural ODE applications in image classification and generative models, achieving approximately 50\% memory reduction and up to 2× speedup while maintaining accuracy comparable to single-precision training.
\end{abstract}

\begin{keywords}
Mixed precision computing, deep learning, neural ODEs
\end{keywords}

\begin{MSCcodes}
68T07, 65L06, 65G50
\end{MSCcodes}

\section{Introduction}
Mixed precision training (MPT) has become a standard practice in deep learning to mitigate the growth of computational costs associated with increasing models and data sizes. It is increasingly used in various learning tasks, including natural language processing, computer vision~\cite{JiaEtAl2018}, and it has also made inroads into scientific machine learning~\cite{hayford_speeding_2024}.

Mixed precision training aims to reduce the computational costs of training by evaluating as many compute-intensive parts of the deep network in low-precision arithmetic as possible, while using as few high-precision operators as necessary to preserve training stability and match the accuracy of networks trained using high-precision arithmetic. The two most significant aspects in which MPT lowers computational costs are enabling the training of larger models and larger batch sizes by storing hidden activations and features in low precision and reducing computational time when models are sized to take advantage of tensor compute units available in modern hardware accelerators such as graphical processing units (GPUs).

In addition to the growing size of models and data, the widespread use of MPT in deep learning can be explained by several characteristics of the problem~\cite{higham_mixed_2022}. First, training is inherently stochastic due to the presence of noise in the data and updates in stochastic gradient methods. Second, deep networks are composed of a small subset of linear algebra routines, often involving dense arrays, which are ideal targets for developing specialized hardware that reduces computation times and energy consumption. Third, the optimization problem is only a surrogate for the actual goal of learning and therefore does not need to be solved with high accuracy.

Most MPT algorithms today maintain the network weights in high precision to prevent cancellation errors during optimization steps, but quantize the weights to low precision to accelerate the forward and backward passes. During network evaluations,  whitelisted, compute-intensive operators are evaluated in low precision,  high precision is used for delicate operations such as accumulation and some nonlinearities, and the loss function is scaled before backpropagation to maximize the range and avoid overflow and underflow~\cite{MicikeviciusEtAl2018}. This strategy balances computational efficiency with numerical stability: compute-intensive operations benefit from hardware acceleration while sensitive accumulations maintain accuracy through higher precision. Developing MPT algorithms is also simplified by the integration of these algorithms into modern deep learning software, such as PyTorch, JAX, and Keras.

Despite these advances, existing MPT techniques can be unreliable for training neural ordinary differential equations (Neural ODEs), which are continuous-time models in which the forward propagation involves the numerical solution of initial value problems whose dynamics are parameterized by a neural network~\cite{e_proposal_2017, haber_stable_2018, ChenEtAl2019}. Due to the nonlinearity of neural networks, Neural ODEs typically rely on explicit time integrators, which can necessitate many steps and thereby exacerbate computational costs and memory requirements.  Calculating the gradient of the loss function with respect to the network weights requires solving a continuous or discrete adjoint equation. We demonstrate theoretically and experimentally that the straightforward application of MPT can lead to the accumulation of roundoff errors during time integration, resulting in growing errors in the solution and gradients as the number of time steps increases.
The main contributions of this paper are:
\begin{enumerate}
\item \textbf{Mixed precision ODE solvers and backpropagation}: We design and analyze mixed precision explicit solvers for neural ODEs with a custom backpropagation scheme that stores weights, states, and adjoints in high precision while evaluating networks in low precision, achieving approximately 50\% memory reduction and up to 2× speedup in our largest example.

\item \textbf{Dynamic adjoint scaling}: We develop an adaptive scaling heuristic to maximally exploit the limited range of \texttt{float16} during backpropagation and prevent over- and underflow without requiring additional hyperparameter tuning.

\item \textbf{Theoretical analysis}: We show that roundoff errors remain in the order of the unit roundoff of the low precision and do not grow uncontrollably with the number of time steps.

\item \textbf{Implementation}: We provide the open-source PyTorch package \texttt{rampde} (robust automatic mixed precision for differential equations) that serves as a drop-in replacement for existing neural ODE implementations at
\begin{center}
    \url{https://github.com/EmoryMLIP/rampde}
\end{center}
\end{enumerate}

Our paper is structured as follows. In~\cref{sec:background}, we provide some background on mixed precision algorithms in scientific computing and deep learning. In~\cref{sec:mpt_node}, we describe our mixed precision training framework for neural ODEs. In~\cref{sec:roundoff} we perform a roundoff error analysis of our forward and backward schemes to show that errors are dominated by roundoff errors in the low-precision network evaluation and nearly independent of the number of integration steps. In~\cref{sec:implementation}, we describe our open-source implementation \texttt{rampde}. In~\cref{sec:experiments},  we demonstrate reliability and broad applicability to neural ODE training problems arising in image classification and generative modeling. In~\cref{sec:discussion}, we discuss our findings and outline the potential for future improvements.

\section{Background}\label{sec:background}

Let us introduce our main notation, briefly review the mixed precision training (MPT) framework for deep learning, and provide background for our contribution by discussing recent works on MPT in scientific machine learning.

\paragraph{Notation}
Without loss of generality, we describe unsupervised training of a neural network $F : \R^n \times \R^p \to \R^m$ with $n$ input features, $m$ output features, and $p$ parameters using samples $\bfx$ from the data distribution $\pi_{\rm data}$ by minimizing the regularized loss function
\begin{equation*}
    \mathcal{L}(\bftheta) = \E_{\bfx \sim \pi_{\rm data}}\left[\ell(F(\bfx,\bftheta))\right] + \frac{\alpha}{2} \|\bftheta\|^2.
\end{equation*}
Here, the choice of loss function $\ell : \R^m \to \R$ depends on the task, and we use a Tikhonov regularizer (in this context known as weight decay) controlled by the parameter $\alpha\geq 0$. The design of the neural network also varies depending on the context.
When training the network using stochastic approximation schemes such as stochastic gradient descent (SGD), the objective function and its gradient are approximated using a batch of independent and identically distributed (i.i.d.) samples $\mathcal{B}=\{\bfx_1, \ldots, \bfx_b\}$
\begin{equation*}
        \mathcal{L}_{\mathcal{B}}(\bftheta) = \frac{1}{|\mathcal{B}|} \sum_{x_i\in\mathcal{B}} \ell(F(\bfx_i,\bftheta)) + \frac{\alpha}{2} \|\bftheta\|^2.
\end{equation*}

\paragraph{Mixed Precision Training (MPT)}
Let us briefly review the principles of MPT proposed in~\cite{MicikeviciusEtAl2018} that are widely adopted in deep learning, discuss their implementation in modern machine learning software, and their use in scientific applications. These
principles establish the foundation for our neural ODE-specific adaptations,
particularly in handling the unique challenges of time integration.
The goal of MPT is to combine low- and high-precision arithmetic to reduce the costs (in terms of memory, energy, and computational time) of network training while achieving similar precision compared to pure high-precision training.

Following standard practice of deep learning today, we use 16-bit floating point formats (either \texttt{float16} or \texttt{bfloat16}) as low precision and \texttt{float32} as high precision.
The main difference between the low-precision formats is the allocation of bits between the mantissa and exponent.
The \texttt{float16} or IEEE-754 half precision has one sign, five exponent bits, and ten mantissa bits, while \texttt{bfloat16}, developed by Google~\cite{KalamkarEtAl2019}, has one sign, eight exponent bits, and seven mantissa bits.
The \texttt{bfloat16} format is appealing because its range coincides with that of \texttt{float32}, which usually alleviates the need to modify the network architecture and adjust hyperparameters and also simplifies conversion. In comparison, \texttt{float16}, uses more bits for the mantissa, resulting in higher precision but a smaller range.

We denote by $Q_{16} : \mathbb{R}^n \to \mathcal{F}_{16}^n$ and $Q_{32} : \mathbb{R}^n \to \mathcal{F}_{32}^n$ the quantization operators that map the elements of an input vector or tensor to the set of representable numbers in the 16-bit and 32-bit floating point representations, respectively.

For ease of presentation, consider a simple stochastic gradient descent (SGD) scheme for training the network, and note that MPT can be similarly used for other stochastic approximation methods. Given a master copy of the weights $\bftheta_k$ in high precision, the $k$th iteration of SGD in MPT performs the update
\begin{equation}
\label{eq:sgd_mpt}
\bftheta_{k+1} = \bftheta_k - \gamma_k Q_{32}(  \nabla (S_k \cdot \mathcal{L}_{\mathcal{B}_k}(Q_{16}(\bftheta_k))))/S_k,
\end{equation}
where $\mathcal{B}_k$ is a randomly sampled batch from the training set, $\gamma_k>0$ is the step size (or learning rate), and $S_k$ is a loss scaling factor that aims to maximize the range of the low-precision floating point system and avoid underflow and overflow during derivative computations. In MPT, the input features are also converted to match the precision of the weights. The sequence of learning rates is chosen by the user and the loss scale $S_k$ is updated heuristically during training. Choosing $S_k$ effectively is crucial since too small values can lead to underflow, which means that small entries in the gradient are lost, and too large values can lead to overflow, which means that updates need to be discarded. The most widely used heuristics therefore halve $S_k$ when gradients overflow and double it when no overflow occurred in a user-defined number of the last few consecutive steps.

Since a single scaling factor cannot keep the scales of all hidden layers in a meaningful range for deep and complex network architectures, adaptive loss scaling schemes use layer-dependent scaling factors~\cite{ZhaoEtAl2019}. The proposed heuristic for computing the scaling factors automatically during training reduced the time to convergence and improved accuracy compared to standard MPT techniques in examples from image classification and object detection.

Modern deep learning software packages simplify the use and experimentation with mixed precision algorithms by providing automatic mixed precision (AMP) contexts.
These AMP contexts whitelist compute-intensive operations that are usually safe in low precision (e.g., dense matrix-matrix operations, convolutions, etc.) and blacklist more delicate operations (e.g., norms, accumulations, nonlinearities such as softmax, etc.)
When evaluating the neural network and its gradients in an AMP context, the inputs of the whitelisted operations are automatically converted and performed with that precision.
Although this adds convenience, it can complicate the interpretation of mixed precision schemes such as~\eqref{eq:sgd_mpt} since only parts of weights and operations may be performed in low precision.

Another practical aspect of MPT on modern GPU hardware is that some linear algebra operations leverage mixed precision algorithms internally.
One prominent example is the introduction of block fused multiply-adds (FMAs) in tensor compute units, which is described and analyzed in detail in~\cite{blanchard_mixed_2020}. Block FMAs add accuracy for operations on 16-bit tensors, as accumulation is done in single precision intermediates that are then rounded to half precision~\cite{KalamkarEtAl2019}. With appropriately sized matrix dimensions, observed speed-ups of tensor compute units can be up to 8 times with a reduced loss of accuracy compared to pure 16-bit operations~\cite{blanchard_mixed_2020, MicikeviciusEtAl2018}.

Mixed precision training has also been used in scientific machine learning (SciML) tasks such as solving partial differential equations (PDEs) with Physics-Informed Neural Networks (PINNs) and Deep Operator Networks (DeepONet).
As expected, pure low-precision training can fail in SciML tasks, but the use of the MPT framework can lead to effective training at reduced costs when hyperparameters and loss functions are chosen adequately~\cite{hayford_speeding_2024}.

\paragraph{Neural ODEs}
Instead of defining the neural network as the composition of a finite number of layers, Neural ODEs are continuous-time models that define $F(\bfx,\bftheta) = \bfy(T)$ as the terminal state of the initial value problem
\begin{equation}
    \label{eq:NODE}
    \frac{d}{dt} \bfy(t) = f(t, \bfy(t), \bftheta(t)),\quad \text{ for } t \in (0,T], \quad \text{ and } \quad \bfy(0) = \bfx.
\end{equation}
Here, $T$ is the terminal time, $f: [0,T] \times \R^n \times \R^p \to \R^n$ is a neural network with potentially time-dependent weights $\bftheta : [0,T] \to \R^p$.
Although first instances of continuous-time recurrent neural networks can be dated back at least to the 1980s (see historical notes in~\cite{MassaroliEtAl2021}) and ODE-based architectures have also been used for time series regression in the 1990s (see, for example,~\cite{RICOMARTINEZ1992}), their recent interpretation as continuous residual networks has contributed to their reemergence; see~\cite{e_proposal_2017, haber_stable_2018}.
The term neural ODE was coined in~\cite{ChenEtAl2019}, and their use has spread to other deep learning tasks such as generative modeling~\cite{GrathwohlEtAl2018, OnkenEtAl2020OTFlow, Finlay:2020wt}, mean field games~\cite{RuthottoEtAl2020MFG}, and optimal control~\cite{OnkenEtAl2020ieee}.
A widely used implementation in the PyTorch framework is the \texttt{torchdiffeq} project~\cite{torchdiffeq}.
Generalizations to rough paths and stochastic dynamics also provide the basis of Neural SDEs~\cite{Kidger2022} and score-based generative models~\cite{SongEtAl2020}.

A key distinguishing factor in neural ODE implementations is the order of differentiation and discretization.
Optimize-then-discretize algorithms such as~\cite{ChenEtAl2019, GrathwohlEtAl2018}, derive the gradient of the objective function in continuous time, and use adaptive time integrators for the forward and adjoint process to approximate the update in each step.
Although this simplifies their use, the implicit function theorem requires that both ODEs be solved relatively accurately to approximate the continuous gradient with sufficient accuracy~\cite{Gholami:2019wm,Onken2020DO}.
Discretize-then-optimize approaches first discretize the problem in time and then compute the gradient of the discrete problem analytically or through automatic differentiation.
The main advantage is that the accuracy of the time integrator can be chosen arbitrarily without compromising the accuracy of the (discrete) gradient. In general, and in particular in machine learning tasks, using a lower accuracy for the ODE solver is attractive and can lead to significant speedups when the data is noisy and the ODE is not derived from first principles.
In an MPT framework it is not clear that the ODE can be solved as accurately as required in the optimize-then-discretize approach (i.e., the adaptive solver may fail because required step sizes are rounded to zero or accuracy is limited by low-precision evaluations of $f$) and therefore, we limit the discussion to discretize-then-optimize methods.

Mixed precision implementations of additive Runge-Kutta (RK) schemes have also been proposed in~\cite{ grant_perturbed_2022} and their performance has been analyzed in~\cite{burnett2021performance-755}.
These works aim to speed up implicit steps of the solvers using low precision (here single precision), while using high precision (here double or quadruple) for explicit steps to maintain overall accuracy. Performance evaluations on IBM Power9 and Intel chips show speed-ups and reductions in energy costs for specific solver parameters and software kernels.
Such schemes could be attractive for neural ODEs in an optimize-then-discretize approach; however, implicit steps are rarely used in deep learning due to the nonlinearity introduced by the neural network, which often has limited exploitable structure for the nonlinear solver. As noted above, following the ODE trajectory with high accuracy is usually less important in a learning context than in physical models, as long as the neural ODE leads to effective learning.
A complementary line of work~\cite{croci2022mixed} targets explicit stabilized Runge--Kutta (RKC) methods for stiff problems, which avoid implicit solves altogether by using many cheap explicit stages whose stability region grows as $s^2$ with the stage count $s$. Since the interior stability stages exist only to enlarge the stability region and do not contribute to accuracy, they are demoted to low precision via a linearised Jacobian increment, yielding a mixed-precision scheme with no low-precision implicit solve.
In order to apply any of these schemes for neural ODE training, the principles we propose in this work for simpler time integrators can be applied to derive an effective mixed-precision backward propagation.

\section{Mixed Precision Training of Neural ODEs}\label{sec:mpt_node}

We describe our mixed precision scheme for optimization problems governed by a neural ODE such as~\eqref{eq:NODE}, the custom backpropagation scheme to compute gradients of the loss with respect to time, input features, and weights, and our dynamic adjoint scaling to maximize the  range of the low precision.

Let $\bft\in\R^{N+1}$ with $0 = \bft_0 < \bft_1 < \cdots < \bft_N = T$ be a partition of the time interval and let $h_i = \bft_{i+1}-\bft_i$ be the size of the $i$th step where $i=0,\ldots, N-1$.
For ease of presentation, we assume a batch-size of one example.
Given the initial state $\bfy_0 = \bfy(0) =\bfx$,  we are interested in approximating the ODE solution $\bfy_i \approx \bfy(\bft_i)$ with an explicit single-step method of the form
\begin{equation*}
    \bfy_{i+1} =  \bfy_i + h_i \Phi(f, \bfy_i, \bft_i, h_i, \bftheta),
\end{equation*}
where different choices of the increment function, $\Phi$,
lead to  different time stepping schemes. For example,  $\Phi(f, \bfy_i, \bft_i, h_i, \bftheta) = f(\bft_i,\bfy_i,\bftheta(\bft_i))$ leads to the forward Euler method.

To limit roundoff errors during time integration, we store the current state and compute the accumulation in high-precision, which gives the mixed precision step
\begin{equation*}
    \bfy_{i+1} = \bfy_i + h_i Q_{32} (\Phi(f, Q_{16}(\bfy_i), \bft_i, h_i, Q_{16}(\bftheta))).
\end{equation*}
Similar to~\eqref{eq:sgd_mpt}, the computationally expensive part of evaluating the neural network is performed in low precision.
To save memory, our method stores and returns the intermediate states $\{\bfy_i(\bft,\bfx,\bftheta)\}_{i=0}^N$ in low precision; see~\cref{alg:forward-pass} for a summary.

\begin{algorithm}[t]
  \caption{Mixed Precision Forward Pass}
  \label{alg:forward-pass}
  \begin{algorithmic}[1]
    \REQUIRE time steps $0 \leq \bft_0 < \bft_1 < \dots < \bft_N = T$, initial state $\bfy_0 = \bfx$, weights $\bftheta$, increment function $\Phi(f,\bfy,t,h,\bftheta)$
    \ENSURE state trajectory $\{\,Q_{16}(\bfy_i)\}_{i=0}^N$ in low precision
    \STATE $\bfy \gets Q_{32}(\bfx)$
      \COMMENT{loop invariant: $\bfy$ is the high-precision representation of $\bfy_i$}
    \FOR{$i = 0$ \TO $N-1$}
      \STATE $h_i \gets \bft_{i+1} - \bft_i$
      \STATE $d\bfy \gets \Phi(f,Q_{16}(\bfy), \bft_i, h_i, Q_{16}(\bftheta))$
        \COMMENT{evaluate increment in low precision}
      \STATE $\bfy \gets \bfy + h_i Q_{32}(d\bfy)$
        \COMMENT{accumulate in high precision}
      \STATE $\bfy_{i+1} \gets Q_{16}(\bfy)$
        \COMMENT{store state in low precision}
    \ENDFOR
    \STATE \textbf{return} $\{\,\bfy_i\}_{i=0}^N$
  \end{algorithmic}
\end{algorithm}

Using the output from the forward propagation, we approximate an objective functional of the form
\begin{equation*}
    \mathcal{L}(t,\bfx,\bftheta) = \int_0^T R(t,\bfy(t),\bftheta(t)) dt + C(\bfy(T))
\end{equation*}
with running cost $R$ and terminal cost $C$.
We use a quadrature, leading to the discrete objective function
\begin{equation}
L(\{\bfy_i(\bft,\bfx,\bftheta)\}_{i=0}^N, \bftheta, \bft) = \sum_{i=0}^N w_i R(\bft_i,\bfy_i,\bftheta(\bft_i)) + C(\bfy_N).
\end{equation}
In the backward pass, we differentiate the objective function with respect to the intermediate states of the ODE, the neural network weights, and the time steps by reversing the time integration.
We initialize $\bfa = Q_{32}\left(\frac{\partial C}{\partial \bfy_N}\right)$,  $\bfg =  Q_{32}\left(\frac{\partial L}{\partial \bftheta}\right)
\in \R^p,$ and $\bft' =  Q_{32}(\frac{\partial L}{\partial \bft})
\in \R^{N+1}$.

Our dynamic adjoint scaling scheme, which can be seen as an adaptation of the layer-wise scaling in~\cite{ZhaoEtAl2019} to the ODE setting, scales the adjoint variable so that the  range of the low-precision system is maximally exploited.
To minimize the risk of underflow, we initialize the adjoint scale to $S_N = 2^{\left\lfloor{-\log_2(\ulow \|\bfa\|)}\right\rfloor}$ to ensure that $\|S_N \bfa \| \approx \frac{1}{\ulow}$. Here, $\ulow$ is the unit roundoff of the low precision, $\|\cdot\|$ is the infinity norm of a vector. For the \texttt{float16} format it is $\ulow := 2^{-11}$ and for \texttt{bfloat16} it is $\ulow=2^{-8}$.
To avoid additional errors from rounding the mantissa, we round the loss scale to a power of two so that the scaling only affects the exponent.

For $i = N-1, N-2, \ldots, 0$, we use automatic differentiation to compute the vector-Jacobian products (vjp) with $\bfJ_\Phi = \left[\left(\frac{\partial \Phi}{\partial \bfy}\right)^\top, \left(\frac{\partial \Phi}{\partial \bft}\right)^\top , \left(\frac{\partial \Phi}{\partial h}\right)^\top, \left(\frac{\partial \Phi}{\partial \bftheta}\right)^\top \right]$ in low precision to compute the updates
\begin{equation}\label{eq:vjp}
 [d\bfa^\top, d\bft^\top, dh^\top, d\bfg^\top] = Q_{16}( S_{i} \bfa)^\top \bfJ_\Phi(f, \bfy_i, \bft_i, h_i, \bftheta).
\end{equation}
Since all these derivatives are along the direction $\bfa$, the computations are combined into one call of the automatic differentiation and the computational graph is built only once by recomputing the increment function using the $i$th state in low precision.

If any of the steps, $d\bfa $, $d\bft $, $dh, d\bfg$ contains any overflow or not-a-number (NaN), we halve the scale factor setting $S_i \gets S_i/2$  and repeat~\eqref{eq:vjp} until all quantities are finite.
These computations re-use the computational graph and do not require additional recomputation of the increment function.
When dynamic scaling is not used, our implementation either performs no overflow checking for optimal performance (\texttt{float32}, \texttt{bfloat16}) or returns infinity gradients for compatibility with PyTorch's GradScaler (\texttt{float16} without dynamic scaling).

After successful computation of the updates, we accumulate the derivative information in high precision by setting
\begin{align}
    {\bft}'_i &\gets \bft'_i + (h_i/S_i) Q_{32}(d\bft_i - dh_i) -  Q_{32} \left(\Phi(f,\bfy_i,\bft_i,h_i,\bftheta)^\top \bfa_{}\right) \\
    {\bft}'_{i+1} &\gets \bft'_{i+1} + (h_i/S_i) dh_i +   Q_{32} \left(\Phi(f,\bfy_i,\bft_i,h_i,\bftheta)^\top \bfa_{}\right)\\
    {\bfa} &\gets \bfa + w_i Q_{32}\left( \frac{\partial R}{\partial \bfy}\right) - (h_i/S_i) Q_{32}(d\bfa)\\
    {\bfg} &\gets \bfg + (h_i/S_i) Q_{32}(d\bfg).
\end{align}
If no re-scaling was needed in the Jacobian computation,  all these quantities are finite, and the adjoint variable is small (i.e.,  $\|\bfa\| \ulow \leq \frac{1}{2}$) we  double the scale factor for the next time step.
This completes the step, and we continue with the $(i-1)$th step.

This process yields the total derivatives of the loss, $\frac{dL}{d\bfx}=\bfa, \frac{dL}{d\bftheta} = \bfg, \frac{dL}{d\bft}=\bft'$, and is summarized in~\cref{alg:backward-pass}.

\begin{algorithm}[t]
  \caption{Mixed Precision Backward Pass with Dynamic Adjoint Scaling}
  \label{alg:backward-pass}
  \begin{algorithmic}[1]
    \REQUIRE state trajectory $\{\bfy_i\}_{i=0}^N$, loss function $L(\{\bfy_i(\bft,\bfx,\bftheta)\}_{i=0}^N,\bftheta, \bft)$, parameters $\bftheta \in\R^p$, maximum number of attempts $k_{\rm max}$
    \ENSURE total derivatives $\frac{d {L}}{d \bfx}, \frac{d {L}}{d \bftheta}, \frac{d {L}}{d \bft}$
    \STATE $\bfa \gets Q_{32}\left(\frac{\partial {L}}{\partial \bfy_N}\right), \bfg = Q_{32}\left(\frac{\partial L}{\partial \bftheta}\right), \bft' = Q_{32}\left(\frac{\partial L}{\partial \bft}\right)$ \COMMENT{partials in high precision}
    \STATE $S_N \gets  2^{\left\lfloor{-\log_2(\ulow \|\bfa\|)}\right\rfloor}$ \COMMENT{initialize scaling factor}
    \STATE $\bftheta \gets Q_{16}(\bftheta)$ \COMMENT{weights in low precision}
    \FOR{$i = N-1$ \TO $0$}
      \STATE $h_i  \gets \bft_{i+1} - \bft_i$
      \STATE $d\bfy \gets \Phi(f, \bfy_i, \bft_i, h_i, \bftheta)$
        \COMMENT{re-compute with low precision}
      \STATE $S_i \gets S_{i+1}$ \COMMENT{seed scale from previous step}
      \FOR{$k=1,\ldots, k_{\rm max}$}
        \STATE $d\bfa^\top, d\bft_i, dh_i, d\bfg^\top \gets  Q_{16}(S_i \bfa)^\top \bfJ_\Phi (f, \bfy_i, \bft_i, h_i, \bftheta)$
        \IF {$d\bfa, d\bfg, d\bft_i, dh_i$ all finite }
            \STATE \textbf{break} \COMMENT{accept sensitivities}
        \ENDIF
        \STATE $S_i \gets S_i/2$ \COMMENT{halve scale on overflow and retry}
      \ENDFOR
      \IF {$d\bfa, d\bfg, d\bft_i, dh_i$ not all finite}
        \STATE \textbf{error}: dynamic adjoint scaling failed at step $i$
      \ENDIF
      \STATE $\bft'_i \gets \bft'_i + (h_i/S_i) Q_{32}(d\bft_i - dh_i) - Q_{32}(\Phi(f,\bfy_i, \bft_i, h_i, \bftheta)^\top \bfa_{i+1}) $\COMMENT{high-precision}
      \STATE $\bft_{i+1}' \gets \bft'_{i+1} + (h_i/S_i) dh_i + Q_{32}(\Phi(f,\bfy_i, \bft_i, h_i, \bftheta)^\top \bfa_{i+1})$     \COMMENT{high-precision update}
      \STATE $\bfa \gets \bfa + w_i Q_{32} \left(\frac{\partial R}{\partial \bfy} \right) - (h_i/S_i) Q_{32}(d\bfa)$ \COMMENT{high-precision update}
      \STATE $ \bfg \gets \bfg + (h_i/S_i) Q_{32}(d\bfg)$ \COMMENT{high-precision update}
      \IF {$k=1$ (no scaling was needed) \AND $\|\bfa\|\leq \frac{1}{2\ulow }$}
        \STATE $S_i \gets 2 S_i$
      \ENDIF
    \ENDFOR
    \STATE \textbf{return} $\frac{d L}{d \bfx} = \bfa,  \frac{d L}{d \bftheta}= \bfg, \frac{d L}{d\bft} = \bft'$
  \end{algorithmic}
\end{algorithm}

\begin{example}
\label{ex:test_adjoint_scaling}
To assess the effectiveness of our custom backpropagation and dynamic adjoint scaling, we consider the following test problem
\begin{equation}
    y'(t) = -\lambda(t) y(t), \quad \text{where } \quad \lambda(t) = \bftheta_1 t^2 + \bftheta_2 t +\bftheta_3, \quad \text{ and } y(0)=x
\end{equation}
whose analytical solution is
\begin{equation}
    y(t) = x  \exp\left(-  \left(\frac{\bftheta_1}{3} t^3  + \frac{\bftheta_2}{2} t^2 + \bftheta_3 t\right)\right).
\end{equation}
We select the parameters of this problem such that $|y(t)|$ and $|y'(t)|$ span nearly the entire range of the normal numbers in \texttt{float16}, which is $[2^{-14}, 65504]$.
We use the terminal time $T=2.65$, the weights $\bftheta  = (8, -11, 2^{-16})$, and the initial state $x=\frac{65504}{180}$.
We use a 4th-order Runge-Kutta method with 400 equidistant time steps.
As can be seen in \cref{fig:test_adjoint_plot}, the analytical solution and its derivative vary by orders of magnitude and remain within the normal range of \texttt{float16}.
The low-precision approximation (indicated by a dashed red line) fits the analytical solution closely.

In~\cref{tab:test_adjoint_err}, we report the relative errors between the true quantities and the numerical approximations of the terminal state and the derivatives of the loss function $L(\{y_i\}_{i=1}^N, \bftheta, \bft)=\frac12 \|y_N\|_2^2$ with respect to the initial state and the components of $\bftheta$ for the precision of \texttt{float32} and \texttt{float16} and with and without scaling. As expected, scaling does not affect the accuracy of high-precision computations but is crucial in \texttt{float16}. Here, derivative computations underflow without scaling, whereas our dynamic scaling heuristic yields gradient approximations nearly as accurate as the forward solve.
It is important to note that simply scaling the loss function to avoid underflow would cause overflow at intermediate time steps in this example.

\begin{figure}[t]
  \centering
    \centering
    \includegraphics[width=.7\linewidth]{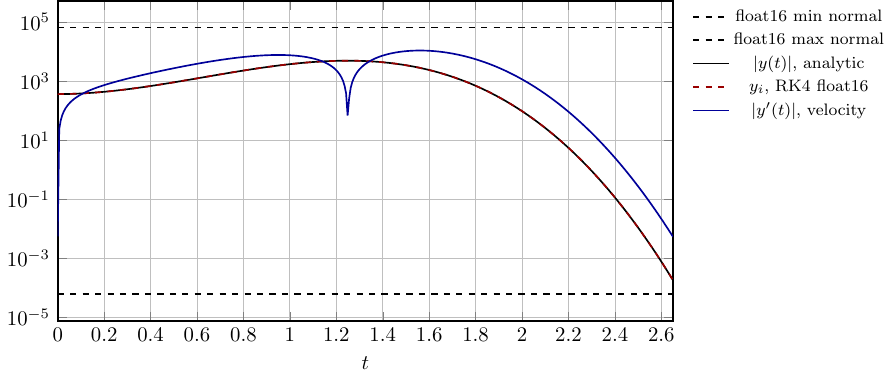}
    \caption{Log-scale plot of $|y(t)|$ (black solid line) and $|y'(t)|$ (blue solid line) used in \cref{ex:test_adjoint_scaling} and the numerical approximation of the solution using RK4 with 400 equidistant time steps in \texttt{float16}. The horizontal dashed lines indicate the range of the normal numbers in \texttt{float16}. In this set up, the ODE almost exhausts the range of \texttt{float16}, complicating the derivative computation.}
    \label{fig:test_adjoint_plot}
\end{figure}

\begin{table}[ht]
  \centering
    \vspace{1em}
    \begin{tabular}{@{}llccccc@{}}
      \toprule
      \textbf{dtype} & \textbf{scaling} & {$\mathrm{RE}\ y(T)$} & {$\mathrm{RE}\ \partial_{y_0}$} & {$\mathrm{RE}\ \partial_{\theta_1}$} & {$\mathrm{RE}\ \partial_{\theta_2}$} & {$\mathrm{RE}\ \partial_{\theta_3}$} \\
      \midrule
      \texttt{float32} & none     & 7.01e-5 & 1.40e-4 & 1.25e-4 & 1.30e-4 & 1.34e-4 \\
      \texttt{float32} & dynamic      & 7.01e-5 & 1.60e-4 & 1.28e-4 & 1.35e-4 & 1.45e-4 \\
      \texttt{float16} & none     & 3.67e-3 & 1.99e+3 & 1.00e+0 & 1.00e+0 & 1.00e+0 \\
      \texttt{float16} & dynamic      & 3.67e-3 & 5.89e-3 & 6.05e-3 & 5.96e-3 & 5.88e-3 \\
      \texttt{bfloat16} & none & 3.65e-2 & 4.50e-2 & 5.24e-2 & 4.96e-2 & 4.73e-2 \\
      \texttt{bfloat16} & dynamic & 3.65e-2 & 4.49e-2 & 5.24e-2 & 4.95e-2 & 4.73e-2 \\
      \bottomrule
    \end{tabular}
  \caption{Comparing the relative errors ($\mathrm{RE}$) of numerical approximations of the terminal state and the derivatives of $L(\{y_i\}_{i=1}^N, \bftheta, \bft) = \frac12 \|y_N\|_2^2$ with respect to the initial state and weights for different precisions and scaling strategies in \cref{ex:test_adjoint_scaling}. The table demonstrates the necessity and effectiveness of dynamic adjoint scaling in \texttt{float16}, which leads to gradient approximations with accuracy comparable to the forward solve. As expected, the gradient scaling has little effect on \texttt{float32} and \texttt{bfloat16} as they have the same range.}
  \label{tab:test_adjoint_err}

\end{table}

\end{example}

\section{Roundoff Error Analysis}\label{sec:roundoff}
We show that, for sufficiently regular ODEs, the relative errors of the forward solution, adjoint variables, and gradients between the exact time integrator and our mixed precision scheme are of the order of the low precision unit roundoff and do not grow uncontrolled with the number of time steps.

For brevity, we limit our analysis to the error introduced by finite-precision arithmetic but note that our result can be used to bound the overall error between the numerical and true solution.
To this end, we note that the total error comprises the discretization error and the mixed precision error.
If the latter can be bounded independently of the step size, this means that our mixed precision scheme is expected to show the same order of convergence until the discretization error reaches that level.

In our analysis, we consider a fixed $\bftheta$ and equidistant time grid with step size $h$ to simplify our notation; that is, we write the update mapping as $\Phi=\Phi(\bfy) $ suppressing the dependence on $f$, $h$ and $\bftheta$. 
Unless otherwise noted, products and divisions of two vectors in this section are to be understood component-wise.
We denote exact quantities (computed in infinite precision) without decoration (for example, $\Phi, \bfy_i, \bfa_i$) and rounded quantities (computed in finite precision) with a tilde (for example, $\tilde{\Phi}, \tilde{\bfy}_i, \tilde{\bfa}_i$), following the convention of Higham~\cite{Higham2002}.
We denote error vectors as $\bfe_{\bfx} = \bfx - \tilde{\bfx}$ (exact minus rounded) and, for vectors $\bfx$ with non-zero components, relative error vectors as $\bfdelta_\bfx = \frac{\bfe_{\bfx}}{\bfx}$.
The vectors $\bfr_L$ and $\bfr_H$ with $\|\bfr_L\| \leq \ulow$ and $\|\bfr_H\| \leq \uhigh$ denote the relative quantization errors and $\ulow$ and $\uhigh$ are the unit roundoffs of the low and high precision, respectively.
We assume round-to-nearest mode, the IEEE 754 default rounding mode.
As common, we measure all errors in the infinity norm of a vector and to ease notation, we drop the subscript on norms.

The quantization errors satisfy
\begin{equation*}
    Q_{16}(\bfx) = \tilde{\bfx} = (\bfone + \bfr_{16}) \bfx,
    \quad \text{ for } \quad \|\bfr_{16}\| \leq \ulow,
\end{equation*}
where $\bfone$ is a vector of all ones and $\tilde{\bfx}$ denotes the rounded value.
We further assume that $Q_{32}(\bfx)$ introduces no error when converting from low to high precision, as low-precision values are exactly representable in high precision.
However, rounding the result of an addition in high precision introduces an error
\begin{equation*}
   Q_{32}(\bfx+\bfy) = (\bfone+\bfr_{32}) (\bfx+\bfy) \quad \text{with} \quad \|\bfr_{32}\| \leq \uhigh.
\end{equation*}

To simplify our notation, we ignore the running costs as they can be added to the state vector.

We denote with $\DPhiexact$ and $\nablaCexact$  the Jacobian of $\Phiexact$ and gradient of $\Cexact$ in exact arithmetic, and with
$\DPhirounded(\bfy)$ and $\nablaCreounded$  their finite precision counterparts obtained using automatic differentiation and finite precision arithmetic.
\begin{assumption}\label{assumption}\
\begin{enumerate}
    \item {\bf Smoothness and boundedness}: The increment function $\Phiexact$ is bounded and has bounded first and second derivatives, and we denote with $L_{\Phiexact}$ and $L_{\DPhiexact}$ constants such that
    \begin{equation}\label{eq:bound_DPhi}
    \|\DPhiexact(\bfy)\| \leq L_{\Phiexact}, \quad \|D^2\Phiexact(\bfy)\| \leq L_{\DPhiexact}, \quad \forall \bfy \in \R^d, \quad i=1,\ldots,N,
    \end{equation}
    and similarly for its finite precision implementation we assume there is a constant $L_{\Phirounded}$ such that
\begin{equation}\label{eq:bound_DPhiFP}
    \|\DPhirounded(\bfy)\| \leq L_{\Phirounded}, \quad \text{for all } \bfy \in \R^d, \quad i=1,\ldots,N.
    \end{equation}

This constant depends on the number and nature of floating point operations performed in each evaluation.

The cost function $\Cexact$ is differentiable and has bounded derivatives, with Lipschitz constant $L_{\nabla C}$, for all $\bfx, \bfy \in \R^d$,
    \begin{equation}\label{eq:LipschitzNablaC}
        \|\nablaCexact(\bfx) - \nablaCexact(\bfy)\| \leq L_{\nabla C} \|\bfx - \bfy\|.
    \end{equation}

    \item {\bf Behavior along the states sequences}: The components of the states $\bfy_1, \bfy_2, \ldots, \bfy_N$     are bounded above by $M_\bfy$.
    If components are zero or near-zero, one could add a small shift $m_\bfy$ to avoid division by zero when deriving relative errors. We also assume there are constants $G_{\bfy}$  and $G_{\bfa}$ such that
    $$G_{\bfy}:=\max_i \left\|  \frac{ {\Phirounded_i(\bfy_i)}}{\bfyrounded{}_{i+1}}\right\|,\qquad
    G_{\bfa}:=\max_i \left\| \frac{d\bfarounded{}_i}{\bfarounded{}_{i-1}} \right\|.$$
    And there are constants
    $\mu_{\bfy}$ and $\mu_{\bfa}$ such that $$\mu_{\bfy}:=\max_{0\le i\le N-1}\|\bfyrounded{}_i\|\,\|\bfyrounded{}_{i+1}^{-1}\|,\qquad
       \mu_{\bfa}:=\max_{0\le i\le N-1}\|\bfarounded{}_{i+1}\|\,\|\bfarounded{}_{i}^{-1}\|.$$

 There is a constant $m_{\nabla C}$ such that
 $   m_{\nabla C}:=\min_i \|\nabla C(\bfy_i)\|.$

\item  {\bf Relative error bounds in terms of the precision}: There exist constants $M_\Phi$, $M_{D\Phi}$, and $M_{\nabla C}$ and the following bounds of the relative error of low-precision evaluations. Along $\{ \bfy_i \}_{i=0}^N$
we have
\begin{align}
\bfdelta_{\Phi(\bfy_i)}:=  \frac{\Phiexact(\bfy_i)-\Phirounded(\bfy_i)}{\Phirounded(\bfy_i)},\qquad \|\bfdelta_{\Phi(\bfy_i)}\| &\leq M_\Phi \ulow \label{eq:bound_eval},   \\
\|\DPhiexact(\bfy_i)^\top-\DPhirounded(\bfy_i)^\top \|\leq M_{D\Phi} \ulow \|\DPhirounded(\bfy_i)^\top \|,  \label{eq:bounded_DPhi}\\[0.15cm]
\bfdelta_{\nabla C}:=\frac{\nablaCexact(\bfy_i)-\nablaCreounded(\bfy_i)}{\nablaCreounded(\bfy_i)}, \qquad \left \|\bfdelta_{\nabla C} \right\| &\leq M_{\nabla C} \ulow \label{eq:round_err_C}.
\end{align}

\end{enumerate}

\end{assumption}

The goal of this section is to prove the following error estimates.
\begin{theorem}[Main result]\label{theo:main_result}
    Under \cref{assumption} and with $\|\bfdelta_{\bfy_0}\|=\mathcal{O}(\ulow)+ \mathcal{O}(\uhigh/h)$, our mixed precision time integrator and custom backward satisfy:
    \begin{enumerate}
        \item accuracy of the state: $\|\bfdelta_{\bfy_i}\| = \mathcal{O}(\ulow)+ \mathcal{O}(\uhigh/h)$;
        \item accuracy of the adjoint: $\|\bfdelta_{\bfa_i}\| = \mathcal{O}(\ulow)+ \mathcal{O}(\uhigh/h)$;
        \item accuracy of the gradient: $\|\bfdelta_{\mathbf{g}}\| = \mathcal{O}(\ulow) + \mathcal{O}(\uhigh/h)$.
    \end{enumerate}
\end{theorem}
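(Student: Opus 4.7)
I would prove the three claims in order by deriving forward and backward error recursions and applying discrete Gronwall-type arguments along the time grid. The unifying observation is that each integration step contributes two kinds of error: a low-precision evaluation error of size $\mathcal{O}(\ulow)$ that enters through the factor $h\Phirounded$, and a high-precision accumulation error of size $\mathcal{O}(\uhigh)$ arising from rounding the sum in the update line. Summing over $N=T/h$ steps, the first source telescopes to $T\cdot\mathcal{O}(\ulow)$, while the second grows linearly with $N$ and hence contributes $T\mathcal{O}(\uhigh/h)$. The Gronwall multiplier $e^{L_{\Phiexact}T}$ is $N$-independent and so is absorbed into the constants, yielding exactly the rate announced in the theorem.

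\textbf{Step 1 (state error).} I would subtract the exact recursion $\bfy_{i+1}=\bfy_i+h\Phiexact(\bfy_i)$ from the implemented one
\begin{equation*}
\bfyrounded{i+1}=(\bfone+\bfr_{32})\cdot\bigl(\bfyrounded{i}+h\,\Phirounded(Q_{16}(\bfyrounded{i}))\bigr),
\end{equation*}
split $\Phirounded(Q_{16}(\bfyrounded{i}))$ into $\Phiexact(\bfyrounded{i})$ plus a term of size $M_\Phi\ulow\|\Phirounded(\bfyrounded{i})\|$ via \eqref{eq:bound_eval}, and use \eqref{eq:bound_DPhi} to linearize $\Phiexact(\bfy_i)-\Phiexact(\bfyrounded{i})$ with remainder controlled by $L_{\DPhiexact}\|\bfe_{\bfy_i}\|^2$. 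This yields a recursion
\begin{equation*}
\|\bfe_{\bfy_{i+1}}\|\le (1+h L_{\Phiexact})\,\|\bfe_{\bfy_i}\|+c_1\,h\,\ulow+c_2\,\uhigh,
\end{equation*}
to which a discrete Gronwall argument applies, producing $\|\bfe_{\bfy_i}\|\le e^{L_{\Phiexact}T}(Tc_1\ulow+Nc_2\uhigh)$. Dividing by $\|\bfyrounded{i}\|$, which is controlled by the constants $\mu_{\bfy}$, $G_{\bfy}$, and $M_{\bfy}$ of \cref{assumption} (augmented by the shift $m_{\bfy}$ near zero components), converts this absolute bound into the claimed relative bound on $\bfdelta_{\bfy_i}$.

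\textbf{Steps 2--3 (adjoint and gradient).} For the adjoint, I would repeat the same strategy backward in time along the automatic-differentiation recursion implemented in \cref{alg:backward-pass}, now using \eqref{eq:bounded_DPhi} for the low-precision Jacobian, combining the Step~1 state bound with \eqref{eq:bound_DPhi} to control $\DPhirounded(\bfyrounded{i})-\DPhiexact(\bfy_i)$, and initializing from the terminal error $\|\bfe_{\bfa_N}\|\le M_{\nabla C}\ulow\|\nablaCexact(\bfy_N)\|$ from \eqref{eq:round_err_C}. The dynamic scaling factor $S_i$ cancels algebraically between the pre-multiplication of $\bfa$ and the post-division of the VJP output, so it does not appear in the final bound; its only role in the analysis is to guarantee that quantizing $S_i\bfarounded{i}$ introduces a relative error of only $\mathcal{O}(\ulow)$ in the scaled vector, which transfers to $\bfa$ after the cancellation. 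A backward discrete Gronwall argument, together with $\mu_{\bfa}$, $G_{\bfa}$, and $m_{\nabla C}$, then yields $\|\bfdelta_{\bfa_i}\|=\mathcal{O}(\ulow)+\mathcal{O}(\uhigh/h)$. Finally, $\mathbf{g}$ is a high-precision accumulation of terms of the form $h\,(\bfarounded{i+1})^{\top}\DPhirounded(\bfyrounded{i})$ along the $\bftheta$-direction; bounding each summand by the Step~1--2 estimates and summing gives the same bound for $\bfdelta_{\mathbf{g}}$.

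\textbf{Main obstacle.} The principal difficulty is not the Gronwall machinery but the careful bookkeeping of the dynamic adjoint scaling: one must verify that the adaptive power-of-two $S_i$, even after the overflow-triggered halvings inside the inner loop of \cref{alg:backward-pass}, keeps the relative quantization error of $S_i\bfarounded{i}$ at the level $\ulow$ at every step, and that no scaling-dependent factor accumulates across the $N$ backward steps. A secondary subtlety is the conversion between absolute and relative errors along the rounded sequence: the constants $\mu_{\bfy}$, $\mu_{\bfa}$, $G_{\bfy}$, $G_{\bfa}$, and $m_{\nabla C}$ in \cref{assumption} are introduced precisely to enable this conversion, but a short continuity argument based on the Step~1 bound is needed to transfer them from the exact trajectory to the one actually computed.
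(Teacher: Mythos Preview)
Your proposal is correct and follows essentially the same three-lemma structure as the paper: a forward Gronwall recursion for $\bfdelta_{\bfy_i}$, a backward one for $\bfdelta_{\bfa_i}$ that feeds in the forward bound, and a summation argument for $\bfdelta_{\bfg}$. The one methodological difference is that you derive an absolute-error recursion and convert to relative at the end, whereas the paper divides componentwise by $\bfyrounded{i+1}$ (respectively $\bfarounded{i-1}$) \emph{before} taking norms, so that the Gronwall constant is $\mu_{\bfy}L_{\Phiexact}+G_{\bfy}$ rather than your $L_{\Phiexact}$; this is precisely why those ratio constants in \cref{assumption} are defined as they are, and it avoids the separate lower-bound argument you sketch for the conversion step.

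One point worth flagging: you identify the dynamic adjoint scaling as the principal obstacle, but the paper's proof does not analyze $S_i$ at all. The scaling is treated as algebraically cancelling (exactly as you say), and the residual quantization effect is simply absorbed into the standing hypothesis \eqref{eq:bounded_DPhi} that the low-precision Jacobian-vector product has relative error $\mathcal{O}(\ulow)$. So the bookkeeping you anticipate is not carried out in the paper; the assumption is strong enough to bypass it.
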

The theorem has practical implications for choosing a step size. Balancing the two bounds $\mathcal{O}(\ulow)$ and $\mathcal{O}(\uhigh/h)$ yields a crossover at $h\sim \uhigh/\ulow$. For $h\gg \uhigh/\ulow$, the $\mathcal{O}(\ulow)$ term dominates and the mixed precision scheme works well. For $h\ll \uhigh/\ulow$, the accumulation term $\mathcal{O}(\uhigh/h)$ dominates and limits the performance of mixed precision training. In between, the performance depends on the constants.

\begin{lemma}[Forward error]\label{lemm:fwd_error}
    Under~\cref{assumption}, for all $i=1,\ldots,N$,  the relative forward propagation error $\bfdelta_{\bfy_i} = \frac{\bfyexact{i} - \bfyrounded{i}}{\bfyrounded{i}}$ satisfies
    \begin{equation}\label{eq:rel_err_y}
        \|\bfdelta_{\bfy_i}\| \leq  \|\bfdelta_{\bfy_0}\| \exp(\bft_i K_1) + (\exp(\bft_i K_1)-1) \left( K_2 \ulow  + K_3 \frac{\uhigh}{h} \right),
    \end{equation}
    for constants $K_1, K_2, K_3$ independent of $h$.
\end{lemma}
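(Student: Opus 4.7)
My plan is to derive a one-step bound on the absolute propagation error $\bfe_{\bfy_{i+1}} = \bfyexact{i+1}-\bfyrounded{i+1}$, unroll it as a discrete Gronwall recursion, and finally convert to the componentwise relative error $\bfdelta_{\bfy_i}$ using the boundedness assumptions from~\cref{assumption}.

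First I write the exact step as $\bfyexact{i+1} = \bfyexact{i} + h\,\Phiexact(\bfyexact{i})$ and the mixed-precision step, following~\cref{alg:forward-pass}, as $\bfyrounded{i+1} = (\bfone+\bfr_{16})\cdot\bigl((\bfone+\bfr_{32})\cdot(\bfyrounded{i}+h\,\Phirounded(\bfyrounded{i}))\bigr)$, with $\|\bfr_{16}\|\le \ulow$ and $\|\bfr_{32}\|\le \uhigh$. Subtracting, dropping second-order products of small quantities, and bounding
$\|\Phiexact(\bfyexact{i})-\Phirounded(\bfyrounded{i})\|\le L_{\Phiexact}\,\|\bfe_{\bfy_i}\|+M_\Phi \ulow\,\|\Phirounded(\bfyrounded{i})\|$ via the Lipschitz bound implied by~\eqref{eq:bound_DPhi} together with the evaluation bound~\eqref{eq:bound_eval}, I obtain the one-step recurrence
\begin{equation*}
\|\bfe_{\bfy_{i+1}}\| \le (1+h L_{\Phiexact})\,\|\bfe_{\bfy_i}\| + h\,M_\Phi\,C_\Phi\,\ulow + C_{32}\,\uhigh + \mathcal{O}(\ulow \uhigh),
\end{equation*}
where $C_\Phi$ bounds $\|\Phirounded\|$ via~\eqref{eq:bound_DPhiFP} plus boundedness of $\bfyrounded{i}$, and $C_{32}$ absorbs $\|\bfyrounded{i+1}\|$ via $M_\bfy$.

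Applying the standard discrete Gronwall inequality to this recurrence and using $(1+h L_{\Phiexact})^i \leq \exp(\bft_i L_{\Phiexact})$ yields
\begin{equation*}
\|\bfe_{\bfy_i}\| \le \exp(\bft_i L_{\Phiexact})\,\|\bfe_{\bfy_0}\| + \bigl(\exp(\bft_i L_{\Phiexact})-1\bigr)\bigl(\tilde K_2\,\ulow + \tilde K_3\,\uhigh/h\bigr),
\end{equation*}
where the $1/h$ on the high-precision term arises because the per-step accumulation error of size $\uhigh\,\|\bfyrounded{i+1}\|$ is summed over $N=T/h$ steps and then divided by the $h L_{\Phiexact}$ appearing in the Gronwall denominator. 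Finally, I convert to componentwise relative error by dividing by $\bfyrounded{i}$ and invoking $\mu_{\bfy}$ from~\cref{assumption} to relate $\|\bfyrounded{i+1}\|$ and $\|\bfyrounded{i}^{-1}\|$; this calibrates the constants $K_1, K_2, K_3$ and gives~\eqref{eq:rel_err_y}.

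The main obstacle is careful bookkeeping of the three distinct rounding sources --- low-precision evaluation of $\Phirounded$, low-precision storage of the state $\bfyrounded{i}$, and high-precision accumulation --- so that the $\ulow$ contribution stays bounded independently of $N$ while the $\uhigh$ contribution correctly carries the $1/h$ factor. In particular, one must verify that the accumulation error per step scales as $\uhigh\,\|\bfyrounded{i+1}\|$ rather than $h\,\uhigh$, which is exactly what converts a naive $T\,\uhigh$ total into the desired $\uhigh/h$ term after the Gronwall denominator absorbs one factor of $h$. Once this accounting is correct and the relative-to-absolute conversion is handled through $\mu_\bfy$, the remaining calculations are routine.
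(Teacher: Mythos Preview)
Your overall strategy---derive a one-step error recursion and apply a discrete Gronwall lemma---is exactly what the paper does. The difference is where you track relative versus absolute errors, and this is where your proposal has a gap.

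The paper divides the one-step error identity componentwise by $\bfyrounded{i+1}$ \emph{before} taking norms, obtaining a recursion directly in $\|\bfdelta_{\bfy_i}\|$. The key observation is that the update relation itself gives $\bfyrounded{i+1}^{-1}\bfyrounded{i} = \bfone - \bfr_H^{i+1} - h\,\bfyrounded{i+1}^{-1}\Phirounded(\bfyrounded{i})$, so $\|\bfyrounded{i+1}^{-1}\bfyrounded{i}\| \le 1 + hG_\bfy + \uhigh$, and combining with the Jacobian bound yields a recursion coefficient $1+h(\mu_\bfy L_{\Phiexact}+G_\bfy)$. This is precisely how $\mu_\bfy$ and $G_\bfy$ enter: as part of the recursion rate $K_1$, not as a post-hoc conversion factor.

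Your plan instead runs Gronwall on $\|\bfe_{\bfy_i}\|$ and only afterward divides by $\bfyrounded{i}$. But $\mu_\bfy = \max_i \|\bfyrounded{i}\|\,\|\bfyrounded{i+1}^{-1}\|$ controls only \emph{consecutive} ratios; it does not give you a uniform bound on $\|\bfyrounded{i}^{-1}\|$, which is what you actually need to turn a global absolute bound into a relative one. You would instead need the lower bound $m_\bfy$ that the paper mentions only parenthetically. Even granting that, the conversion produces an extra prefactor $M_\bfy/m_\bfy$ on the $\|\bfdelta_{\bfy_0}\|$ term, so you do not recover the clean form~\eqref{eq:rel_err_y} with coefficient $\exp(\bft_i K_1)$---you would need $K_1$ to absorb $\log(M_\bfy/m_\bfy)/\bft_i$, which depends on $h$ through $\bft_1=h$. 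Your argument still suffices for the main theorem's $\mathcal{O}(\ulow)+\mathcal{O}(\uhigh/h)$ conclusion, but not for the lemma exactly as stated. The fix is to follow the paper and pass to relative errors inside the recursion rather than at the end.
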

\begin{proof}
    The $i$th step of the exact and mixed precision scheme are, respectively,
\begin{align*}
\bfyexact{i+1} & = \bfyexact{i} + h \Phiexact(\bfyexact{i})\\
(\bfone+\bfr_H^{i+1})\bfyrounded{i+1} & = (\bfyrounded{i} + h \Phirounded(\bfyrounded{i}))
\end{align*}
Subtracting the two gives the error
\begin{align}
  \bfyexact{i+1} -\bfyrounded{i+1} &= \left(\bfyexact{i}-\bfyrounded{i}  \right) + h \left(\Phiexact(\bfyexact{i})-\Phirounded(\bfyrounded{i})  \right) + \bfr_H^{i+1}  \bfyrounded{i+1}.
\label{eq:error_step}
\end{align}
Using Taylor's theorem, the error in the step computation can be expressed as
\begin{align*}
    \Phiexact(\bfyexact{i})-\Phirounded(\bfyrounded{i})  & = \Phiexact(\bfyexact{i}) -\Phiexact(\bfyrounded{i}) + \Phiexact(\bfyrounded{i})
    -\Phirounded(\bfyrounded{i})    \\
    & = \DPhiexact(\bfxrounded{i})\bfe_{\bfy_i}+\bfe_{\Phi(\bfy_i)}
\end{align*}
 where $\DPhiexact$ is the Jacobian of the increment function $\Phiexact$, the $j$th component of $\bfxrounded{i}$ is between the corresponding components of $\bfyrounded{i}$ and $\bfyexact{i}$,  and $\bfe_{\Phi(\bfy_i)} = \Phiexact(\bfyrounded{i})-\Phirounded(\bfyrounded{i}) $.
Substituting this into \eqref{eq:error_step} gives
\begin{align*}
    \bfe_{\bfy_{i+1}} &= (\bfI + h \DPhiexact(\bfxrounded{i})) \bfe_{\bfy_i} + h \bfe_{\Phi(\bfy_i)} + \bfr_H^{i+1}  \bfyrounded{i+1} ,
\end{align*}
where $\bfI$ denotes the identity matrix.
At this point, we can already see that the error due to the low-precision computation of the increment function is scaled by $h$ while the roundoff error from adding the increment is not, which can cause it to dominate the error propagation as the total number of time steps, $N$, increases.
To get a relative error, we divide component-wise by  $\bfyrounded{i+1}$ and simplify
\begin{align*}
    \bfdelta_{\bfy_{i+1}}
    & = \bfyrounded{i+1}^{-1}(\bfI + h \DPhiexact(\bfxrounded{i})) \bfyrounded{i}  \bfdelta_{\bfy_i}  + h   \frac{ \bfe_{\Phi(\bfy_i)}}{\bfyrounded{i+1}} +\bfr_H^{i+1},
\end{align*}
and obtain the bound
\begin{align*}
\|\bfdelta_{\bfy_{i+1}}\|
& \leq \left\| \bfyrounded{i+1}^{-1}\left(
\bfI + h \DPhiexact(\bfxrounded{i}) \right)\bfyrounded{i} \right\|   \left\|\bfdelta_{\bfy_i}\right\|  + h \left\|  \frac{ \bfe_{\Phi(\bfy_i)}}{\bfyrounded{i+1}} \right\| + \|\bfr_H^{i+1}\|.
\end{align*}

To obtain a step-dependent bound on the first term, we use~\eqref{eq:bound_DPhi} and obtain
\begin{equation*}
    \left\| \bfyrounded{i+1}^{-1}\left(
\bfI + h \DPhiexact(\bfyexact{i}) \right)\bfyrounded{i} \right\|\le 1+h(G_{\bfy}+L_{\Phiexact}\mu_{\bfy})+\uhigh,
\end{equation*}
where we have used that
$$\bfyrounded{i+1}^{-1}\bfyrounded{i}=\bfyrounded{i+1}^{-1}((\bfone-\bfr_H^{i+1})\bfyrounded{i+1}-h\Phirounded(\bfyrounded{i}))=\bfone - \bfr_H^{i+1}-h\bfyrounded{i+1}^{-1}\Phirounded(\bfyrounded{i}),$$
and therefore $\|\bfyrounded{i+1}^{-1}\bfyrounded{i}\|\le 1+hG_{\bfy}+\uhigh$ upon discarding higher order terms in $h$.

To obtain a step-independent bound on the second term, we use~\eqref{eq:bound_eval} and obtain
\begin{align*}
\left\|  \frac{ \bfe_{\Phi(\bfy_i)}}{\bfyrounded{i+1}} \right\|= \left\| \frac{ {\Phirounded(\bfyrounded{i})}}{\bfyrounded{i+1}}\bfdelta_{\Phi(\bfy_i)} \right\|  \leq \left\| \frac{ {\Phirounded(\bfyrounded{i})}}{\bfyrounded{i+1}}\right\| \left\|\bfdelta_{\Phi(\bfy_i)} \right\|  \leq G_{\bfy} M_\Phi  \ulow.
\end{align*}
This gives the recursive error bound
\begin{equation*}
\|\bfdelta_{\bfy_{i+1}}\| \leq \left(1 + h\alpha \right) \|\bfdelta_{\bfy_i}\| + (h\beta \ulow +\uhigh),
\end{equation*}
with $\alpha = \mu_{\bfy}L_{\Phiexact} + G_\bfy$ and $\beta = M_\Phi G_{\bfy} $ and applying the linear recursion formula~\cite[Lemma 7.2.2.2]{stoer2002introduction}, we get finally
\begin{align*}
\|\bfdelta_{\bfy_{i}}\| &\leq \exp(\bft_i \alpha )\|\bfdelta_{\bfy_0}\| +  \frac{\exp(\bft_i\alpha)-1}{\alpha} \left( \beta \ulow + \frac{\uhigh}{ h }\right).
\end{align*}
\end{proof}

\begin{lemma}[Adjoint error]\label{lemm:bwd_error}
    Under~\cref{assumption}, the relative adjoint error $\bfdelta_{\bfa_i} = \frac{\bfaexact{i} - \bfarounded{i}}{\bfarounded{i}}$ satisfies
    \begin{equation*}
    \|\bfdelta_{\bfa_i}\| \leq \exp((T-\bft_i) K_4) \|\bfdelta_{\bfa_N}\| + K_5\|\delta_{\bfy_0}\|+ \left( K_6 \ulow + K_7 \frac{\uhigh }{h} \right),
    \end{equation*}
    for constants $K_4, K_5, K_6, K_7$ independent of $h$.
\end{lemma}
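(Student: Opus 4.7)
The plan is to mirror the structure of the forward error proof in \cref{lemm:fwd_error}, running the recursion backward in time from $i=N$ to $i=0$. First I would write the exact adjoint step $\bfaexact{i} = (\bfI + h\,\DPhiexact(\bfyexact{i})^\top)\bfaexact{i+1}$ and the mixed precision step produced by \cref{alg:backward-pass}, which algebraically reduces to
\begin{equation*}
\bfarounded{i} = \bfarounded{i+1} + h\,\DPhirounded(\bfyrounded{i})^\top \bfarounded{i+1} + (\bfone+\bfr_H^{i})\text{-style high-precision accumulation error}.
\end{equation*}
Since $S_i$ is chosen as a power of two, multiplication by $S_i$ and division by $S_i$ act only on the exponent and are exact; the only effect of the dynamic scaling is an $\mathcal{O}(\ulow)$ relative error from the low-precision quantization $Q_{16}(S_i \bfarounded{i+1})$, which can be absorbed into the constants.

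Subtracting the two updates and inserting $\pm \DPhiexact(\bfyrounded{i})^\top\bfarounded{i+1}$ gives an absolute error recursion of the form
\begin{equation*}
\bfe_{\bfa_i} = (\bfI + h\,\DPhiexact(\bfyexact{i})^\top)\bfe_{\bfa_{i+1}} + h\bigl(\DPhiexact(\bfyexact{i})^\top-\DPhiexact(\bfyrounded{i})^\top\bigr)\bfarounded{i+1} + h\bigl(\DPhiexact(\bfyrounded{i})^\top-\DPhirounded(\bfyrounded{i})^\top\bigr)\bfarounded{i+1} + \mathcal{O}(\uhigh \|\bfarounded{i}\|).
\end{equation*}
The first new term is bounded by Taylor's theorem using $L_{\DPhiexact}$ and $\|\bfe_{\bfy_i}\|$, and the second via assumption~\eqref{eq:bounded_DPhi} which yields an $\mathcal{O}(M_{D\Phi}\ulow\,\|\DPhirounded(\bfyrounded{i})^\top\bfarounded{i+1}\|)$ contribution. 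Dividing componentwise by $\bfarounded{i}$ converts this into a relative-error recursion; the amplification factor $\|\bfarounded{i}^{-1}(\bfI + h\DPhiexact^\top)\bfarounded{i+1}\|$ is bounded by $1 + h(G_\bfa + L_{\Phiexact}\mu_\bfa) + \uhigh$ in direct analogy with the forward proof, using the identity $\bfarounded{i}^{-1}\bfarounded{i+1} = \bfone - \bfr_H^{i} - h\,\bfarounded{i}^{-1}\DPhirounded(\bfyrounded{i})^\top\bfarounded{i+1}$ modulo higher order terms in $h$. The forcing then splits into an $\mathcal{O}(h\ulow)$ piece from the Jacobian errors, an $\mathcal{O}(\uhigh)$ piece from accumulation, and a coupling term proportional to $\|\bfdelta_{\bfy_i}\|$ that I would immediately replace by the already-controlled bound from \cref{lemm:fwd_error}.

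For the terminal condition $\bfa_N = Q_{32}(\nablaCreounded(\bfyrounded{N}))$, I would combine the Lipschitz bound \eqref{eq:LipschitzNablaC}, the low-precision evaluation bound \eqref{eq:round_err_C}, and the lower bound $m_{\nabla C}$ to show $\|\bfdelta_{\bfa_N}\| \leq C_1 \|\bfdelta_{\bfy_N}\| + C_2 M_{\nabla C}\ulow$, which is itself $\mathcal{O}(\ulow) + \mathcal{O}(\uhigh/h)$ by \cref{lemm:fwd_error}. Applying the linear-recursion bound~\cite[Lemma 7.2.2.2]{stoer2002introduction} from $N$ down to $i$ then yields the claimed estimate with $K_4 = G_\bfa + L_{\Phiexact}\mu_\bfa$ and constants $K_5,K_6,K_7$ that absorb the forward-error terms. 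The main obstacle will be the clean accounting of the coupling between adjoint and forward errors: the Gronwall-style integration of the adjoint recursion naturally brings a forward-error factor $\sup_j \|\bfdelta_{\bfy_j}\|$ into the bound, and I want to verify that substituting \cref{lemm:fwd_error} does not generate a spurious product of two exponentials. Because the forward bound is already separated into the desired $\mathcal{O}(\ulow) + \mathcal{O}(\uhigh/h)$ form, the substitution should leave a single exponential factor $\exp((T-\bft_i)K_4)$ multiplying $\|\bfdelta_{\bfa_N}\|$, plus additive constants depending on $\exp(T K_1)$ from \cref{lemm:fwd_error} but independent of $h$.
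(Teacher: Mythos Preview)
Your proposal is correct and follows essentially the same route as the paper: subtract the exact and rounded adjoint updates, split the Jacobian discrepancy via $\pm\DPhiexact(\bfyrounded{i})^\top$, pass to relative errors, bound the amplification factor using the identity for $\bfarounded{i}^{-1}\bfarounded{i+1}$, invoke \cref{lemm:fwd_error} uniformly in $i$ for the coupling term, and finish with \cite[Lemma~7.2.2.2]{stoer2002introduction}. Two cosmetic differences: the paper indexes the recursion as $i\to i{-}1$ and writes the propagator as $(\bfI - h\,\DPhiexact^\top)$ rather than your $(\bfI + h\,\DPhiexact^\top)$ (the sign is immaterial after taking norms), and the paper defers the bound on $\|\bfdelta_{\bfa_N}\|$ via \eqref{eq:LipschitzNablaC}, \eqref{eq:round_err_C}, and $m_{\nabla C}$ to the proof of \cref{theo:main_result} rather than including it here.
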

\begin{proof}
    The relative error in the terminal adjoint state is equal to the relative error of the low-precision evaluation of $\nabla C$, that is, $\bfdelta_{\bfa_N} = \bfdelta_{\nabla C}$.
    For the $i<N$th step, subtracting the low-precision  adjoint step, $\bfarounded{i-1}$ from exact counterpart, $\bfaexact{i-1}$, we get the error propagation
\begin{equation*}
\bfe_{\bfa_{i-1}} =  \bfe_{\bfa_i} - h \bfe_{d\bfa_i} + \bfr_H^{i-1}  \bfarounded{i-1},
\end{equation*}
where the error in the increment is given by
\begin{align*}
\bfe_{d\bfa_i} & = d\bfaexact{}_i-d\bfarounded{}_i  =\DPhiexact(\bfyexact{i-1})^\top \bfaexact{i} -  \DPhirounded(\bfyrounded{i-1})^\top \bfarounded{i}  \\
 & = \DPhiexact(\bfyexact{i-1})^\top \bfe_{\bfa_i}+(\DPhiexact(\bfyexact{i-1})-\DPhirounded(\bfyrounded{i-1}) )^\top \bfarounded{i}.
\end{align*}
This gives the error propagation
\begin{equation*}
\bfe_{\bfa_{i-1}} = (\bfI - h \DPhiexact(\bfyexact{i-1})^\top)\bfe_{\bfa_i} - h \left(\DPhiexact(\bfyexact{i-1})- \DPhirounded(\bfyrounded{i-1}) \right)^\top \bfarounded{i}  + \bfr_H^{i-1}  \bfarounded{i-1}.
\end{equation*}
By dividing component-wise by $\bfarounded{i-1}$ and simplifying
\begin{align*}
\bfdelta_{\bfa_{i-1}} &= \bfarounded{i-1}^{-1}(\bfI - h \DPhiexact(\bfyexact{i-1})^\top)\bfarounded{i}\cdot \bfdelta_{\bfa_i} - h \bfarounded{i-1}^{-1}\left( \DPhiexact(\bfyexact{i-1})-\DPhirounded(\bfyrounded{i-1})  \right)^\top \bfarounded{i} + \bfr_H^{i-1}.
\end{align*}
Using the bounds on the Jacobian, we see that
\begin{equation*}
   \| \bfarounded{i-1}^{-1}(\bfI - h \DPhiexact(\bfyexact{i-1})^\top)\bfarounded{i}\|\le 1+h(G_{\bfa}+\mu_{\bfa}L_{\Phiexact})+\uhigh
    \end{equation*}
where we have used
$$ \bfarounded{i-1}^{-1}\bfarounded{i}=\bfarounded{i-1}^{-1}((\bfone-{\bf r}_H^{i-1})\bfarounded{i-1}+hd\bfa_i)=\bfone -{\bf r}_H^{i-1}+h\frac{d\bfa_i}{\bfarounded{i-1}}, \qquad \|\bfarounded{i-1}^{-1}\bfarounded{i}\|\le 1+hG_{\bfa} +\uhigh.$$
Taking the norm of both sides and using the triangle inequality, and dropping terms $\mathcal{O}(h\uhigh)$, we obtain
\begin{equation*}
\|\bfdelta_{\bfa_{i-1}}\| \leq  (1 +h (\mu_{\bfa}L_{\Phiexact} +G_{\bfa}))   \|\bfdelta_{\bfa_i}\| + h \mu_{\bfa}\left\| \DPhiexact(\bfyexact{i-1})^\top-\DPhirounded(\bfyrounded{i-1})^\top  \right\|  + \uhigh.
\end{equation*}
To obtain a step-independent bound on the second term, we compute
\begin{align*}
\|\DPhiexact(\bfyexact{i-1})^\top-\DPhirounded(\bfyrounded{i-1})^\top  \|  & = \|(\DPhiexact(\bfyexact{i-1})- \DPhiexact(\bfyrounded{i-1}))^\top  +  (\DPhiexact(\bfyrounded{i-1})-\DPhirounded(\bfyrounded{i-1}))^\top   \|\\
&\leq \|(\DPhiexact(\bfyrounded{i-1}) -\DPhiexact(\bfyexact{i-1}))^\top \|+\|(\DPhiexact(\bfyrounded{i-1})-\DPhirounded(\bfyrounded{i-1}))^\top \| \\
& \leq L_{\DPhiexact} \|\bfdelta_{\bfy_{i-1}}  \bfyrounded{i-1}\|+\ulow   M_{D\Phi} \|\DPhirounded(\bfyrounded{i-1})^\top\|   \\
& \leq L_{\DPhiexact} M_\bfy \|\bfdelta_{\bfy_{i-1}}\|+\ulow  M_{D\Phi} L_{\Phirounded},
\end{align*}
where we used the Lipschitz continuity of $\DPhiexact$, \eqref{eq:bounded_DPhi}, the upper bound of $\{ \|\bfy_i \| \}_{i=0}^N$ and the boundedness of $\DPhirounded$ \eqref{eq:bound_DPhiFP}.
Dropping higher order terms, we obtain the recursive error bound
$$  \|\bfdelta_{\bfa_{i-1}}\| \leq (1+h\gamma)\|\bfdelta_{\bfa_i}\|+h\theta \|\delta_{\bfy_{i-1}}\|+(h\sigma \ulow+\uhigh), $$
with $\gamma=(\mu_{\bfa}L_{\Phiexact} + G_{\bfa})$, $\tau=\mu_{\bfa}L_{\DPhiexact}M_{\bfy},$ $\sigma= \mu_{\bfa}M_{D\Phi}L_{\Phirounded}$.
Using~\eqref{eq:rel_err_y}, we can bound the contributions of the forward error as
\begin{equation*}
  \|\bfdelta_{\bfy_i}\| \leq  \|\bfdelta_{\bfy_0}\| \exp(\bft_N K_1) + \tilde{K}_2 \ulow  + \tilde{K}_3 \frac{\uhigh}{h}, \quad \forall i,
\end{equation*}
where $\tilde{K}_2 = (\exp(\bft_N K_1)-1) K_2$ and $\tilde{K}_3 = (\exp(\bft_N K_1)-1) K_3$.
This allows us to apply~\cite[Lemma 7.2.2.2]{stoer2002introduction} to
\begin{equation*}
    \|\bfdelta_{\bfa_{i-1}}\| \leq (1+h\gamma)\|\bfdelta_{\bfa_i}\|+h \left(\tau\|\bfdelta_{\bfy_0}\| \exp(\bft_N K_1) +(\tau \tilde{K}_2 + \sigma) \ulow  + (\tau \tilde{K}_3 +1) \frac{\uhigh}{h}\right),
\end{equation*}
which gives the final estimate
\begin{align*}
    \|\bfdelta_{\bfa_{i}}\| \leq & \exp((\bft_N-\bft_i)\gamma)  \|\bfdelta_{\bfa_{N}}\| \\
    &+ \frac{\exp((\bft_N-\bft_i) \gamma)-1}{ \gamma} \left( \frac{\tau\|\bfdelta_{\bfy_{0}}\| }{\exp(\bft_N K_1)} + (\tau \tilde{K}_2+\sigma) \ulow +(\tau\tilde{K}_3 +1) \frac{\uhigh}{h} \right).
\end{align*}
\end{proof}

\begin{lemma}[Gradient error]\label{lemm:grad_error}
    Under~\cref{assumption}, the relative error of $\bfg = \frac{dL}{d\bftheta}$ satisfies
    \begin{equation*}
    \|\bfdelta_{\bfg}\|  = \mathcal{O}(\ulow) + \mathcal{O}(\uhigh/h).
    \end{equation*}
\end{lemma}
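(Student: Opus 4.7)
The plan is to proceed by analogy with the adjoint analysis in \cref{lemm:bwd_error}, but to replace the backward recursion by a direct summation since the weight gradient accumulates additively across time steps rather than through a linear recursion. Writing the exact gradient as
\begin{equation*}
\bfg_{\mathrm{exact}} = \sum_{i=0}^{N-1} h\,\partial_\bftheta\Phiexact(\bfyexact{i})^\top \bfaexact{i+1},
\end{equation*}
and the mixed-precision gradient as the corresponding sum with low-precision parameter Jacobians and rounded adjoints, the running total being accumulated in high precision, the total error decomposes cleanly into a per-step increment error plus roundoff from the $N = T/h$ high-precision additions.

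First, I would bound the per-step increment error
\begin{equation*}
\bfe_{d\bfg_i} = \partial_\bftheta\Phiexact(\bfyexact{i})^\top \bfaexact{i+1} - \partial_\bftheta\Phirounded(\bfyrounded{i})^\top \bfarounded{i+1}
\end{equation*}
by the splitting
\begin{equation*}
\bfe_{d\bfg_i} = \bigl(\partial_\bftheta\Phiexact(\bfyexact{i}) - \partial_\bftheta\Phirounded(\bfyrounded{i})\bigr)^\top \bfaexact{i+1} + \partial_\bftheta\Phirounded(\bfyrounded{i})^\top \bfe_{\bfa_{i+1}}.
\end{equation*}
The first term is controlled by combining the $\ulow$-level evaluation error on the parameter Jacobian (the parameter analogue of \eqref{eq:bounded_DPhi}) with Lipschitz continuity applied to the forward state error from \cref{lemm:fwd_error}; the second term is bounded by a constant multiple of $\|\bfe_{\bfa_{i+1}}\|$, controlled by \cref{lemm:bwd_error}. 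Both contributions yield $\|\bfe_{d\bfg_i}\| = \bigl(\mathcal{O}(\ulow) + \mathcal{O}(\uhigh/h)\bigr)\cdot\mathrm{const}$ uniformly in $i$, with constants depending on the bounds already collected in \cref{assumption}, the magnitude bound $\mu_{\bfa}$, and the initial error $\|\bfdelta_{\bfy_0}\|$.

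Second, I would account for accumulator roundoff. Each high-precision addition $\bfg \leftarrow \bfg + (h/S_i) Q_{32}(d\bfg_i)$ in \cref{alg:backward-pass} contributes a relative error of size $\uhigh$, so a standard Higham-style summation bound gives total accumulator error of order $N\,\uhigh\,\|\bfg\|$, which becomes an $\mathcal{O}(\uhigh/h)$ relative error after using $N=T/h$. The dynamic scaling factor $S_i$ cancels upon multiplication by $h/S_i$ and serves only to keep the low-precision vjp in range, so it does not enter the final bound provided no overflow occurs, which is exactly what \cref{alg:backward-pass} is designed to guarantee. Summing the per-step bound over the $N$ steps (the factor $h$ in each summand exactly cancels $N = T/h$ in the $\ulow$ contribution) and combining with the accumulator bound gives the claim $\|\bfdelta_\bfg\| = \mathcal{O}(\ulow) + \mathcal{O}(\uhigh/h)$.

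The main obstacle will be clean bookkeeping of the dependence on $h$: the $\ulow$ contribution of each per-step error carries a factor $h$ that cancels the factor $N = T/h$ from the number of summands, leaving an $h$-independent $\mathcal{O}(\ulow)$ term, whereas each accumulator roundoff is $h$-free and therefore aggregates to $\mathcal{O}(\uhigh/h)$. A secondary subtlety is that the adjoint bound from \cref{lemm:bwd_error} already contains both orders, so propagating it through the per-step argument does not introduce new asymptotic terms. Once these bookkeeping steps are done, the conclusion—and hence \cref{theo:main_result} by assembling the three lemmas—follows at once.
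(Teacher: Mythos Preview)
Your proposal is correct and follows essentially the same approach as the paper: both arguments hinge on the counting observation that each of the $N=T/h$ summands carries a low-precision error of size $h\,\mathcal{O}(\ulow)$ (so the factor $h$ cancels $N$ to give $T\,\mathcal{O}(\ulow)$) together with a high-precision accumulation error $\mathcal{O}(\uhigh)$ per addition (which aggregates to $\mathcal{O}(\uhigh/h)$). Your version is more explicit about the per-step splitting and the invocation of \cref{lemm:fwd_error,lemm:bwd_error}, whereas the paper compresses this into a two-sentence sketch, but the underlying logic is the same.
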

\begin{proof}
    The gradient accumulations are
    \begin{equation*}
        \bfg = h \sum_{i=0}^{N-1}  Q_{32}\left( \frac{\partial \Phi}{\partial \bftheta}(Q_{16}(\bfy_i))^\top Q_{16}(\bfa_i) \right).
    \end{equation*}
    The sum contains $N=T/h$ terms, so we must make sure it does not grow unboundedly. Each term in the sum has an error of $h \mathcal{O}(\ulow)$ from the low-precision computations, and the high-precision summation adds $\mathcal{O}(\uhigh)$ for each of the $N$ terms. Since $N$ and $h$ cancel in the $\ulow$ term (giving $Nh \mathcal{O}(\ulow) = T\mathcal{O}(\ulow)$), while the high-precision errors accumulate as $N\mathcal{O}(\uhigh) = \frac{T}{h}\mathcal{O}(\uhigh)$, the result follows.
\end{proof}

We are finally ready to prove our main result~\cref{theo:main_result}.
\begin{proof}[Proof of~\cref{theo:main_result}]
    The first item follows from \cref{lemm:fwd_error}. The second item follows from \cref{lemm:bwd_error},
    noting that using \cref{eq:LipschitzNablaC} and \cref{eq:round_err_C} we have
\begin{align*}
\|\bfdelta_{\bfa_N}\| & \leq \left\|\frac{\nablaCexact(\bfyexact{N})-\nablaCreounded(\bfyrounded{N})}{\nablaCreounded(\bfyrounded{N})}  \right\| \\
& \leq \left\| \frac{\nablaCexact(\bfyexact{N}) - \nablaCexact(\bfyrounded{N})}{\nablaCreounded(\bfyrounded{N})}\right\|+ \left\|\frac{\nablaCexact(\bfyrounded{N})-\nablaCreounded(\bfyrounded{N})  }{\nablaCreounded(\bfyrounded{N})} \right\|\\
& \leq    \frac{L_{\nabla C}}{ m_{\nabla C }  }M_{\bfy}\|\delta_{\bfy_N}\|+M_{\nabla C} \ulow ,
\end{align*}
and using \cref{lemm:fwd_error} for bounding $\|\bfdelta_{\bfy_N}\|$.
The third item follows from \cref{lemm:grad_error}.
    We note the recurrent theme that the low-precision errors enter through function evaluations multiplied by $h$ and thus accumulate as $\mathcal{O}(\ulow)$, independent of $h$ while the high-precision errors from the state, adjoint, and gradient updates accumulate as $\mathcal{O}(\uhigh/h)$.
\end{proof}

\section{Implementation and Software}\label{sec:implementation}

We implement the mixed precision forward propagation of~\cref{alg:forward-pass} and the customized backward propagation with dynamic scaling of~\cref{alg:backward-pass} for neural ODEs in the PyTorch package \texttt{rampde}.
The package is designed to train neural ODEs using mixed precision as effectively as in single precision while reducing memory usage and time to solution.
We use PyTorch's built-in autocast functionality and hardware support to accelerate computations particularly on modern GPUs equipped with tensor compute units.

\begin{figure}[htbp]
\centering
\begin{lstlisting}[style=rampdepython]
#from torchdiffeq import odeint
from rampde import odeint
...
with autocast(device_type='cuda', dtype=dtype_low):
    y = odeint(func, y0, timesteps)
    loss = L(y)
    loss.backward()
\end{lstlisting}
\caption{Minimal example for switching from \texttt{torchdiffeq} to \texttt{rampde} requires only changing the import statement and wrapping the ODE solve, loss evaluation, and backward pass in a PyTorch autocast context that specifies the low-precision dtype.}
\label{fig:rampde_minimal}
\end{figure}
To allow users to experiment with \texttt{rampde} without excessive code changes, our software is designed to closely mirror that of \texttt{torchdiffeq}~\cite{torchdiffeq}, the perhaps most commonly used library for neural ODE training.
In most cases, using our package only requires changing an import and adding an autocast context, as illustrated in the minimal example in~\cref{fig:rampde_minimal}.

A key difference to \texttt{torchdiffeq} is that our package is limited to discretize-then-optimize schemes using fixed time steps.

The autocast context in the above example controls the mixed precision computation including casting variables and accelerating whitelisted operations; see the documentation of PyTorch's built-in autocast functionality~\cite{pytorch_autocast2024}.
Our current implementation is limited to CUDA-enabled GPUs, where we support both \texttt{float16} and \texttt{bfloat16} arithmetic.
Users can also experiment with \texttt{bfloat16} on some CPUs.
When mixed precision is not supported (e.g., on Apple Silicon/MPS devices), the code automatically falls back to single-precision arithmetic.

The \texttt{rampde} package can be installed via pip or conda. The source code is hosted on Github where we also provide detailed instructions, minimal examples, and list all required dependencies with explicit version constraints. We publish the code under the permissive MIT license.

The key component of our package is the mixed precision forward propagation in~\cref{alg:forward-pass} and the custom backpropagation with dynamic adjoint scaling in~\cref{alg:backward-pass}.
Our package is designed modularly to allow extending the available forward solvers (currently, forward Euler and RK4) and scaling (currently, no scaling for testing and dynamic adjoint scaling as default).

To achieve optimal performance across different precision formats and scaling requirements, we provide three backward pass implementations that share the same forward propagation.
The unscaled variant eliminates overflow checking and scaling infrastructure for minimal overhead and is the default for \texttt{float32} and \texttt{bfloat16}, where the wider range makes overflow unlikely.
The unscaled-safe variant adds exception handling for overflow protection and is the default for \texttt{float16} without dynamic scaling. It provides compatibility with PyTorch's standard GradScaler when overflow protection is needed without dynamic scaling overhead.
The dynamic variant implements the full scaling algorithm from~\cref{alg:backward-pass} with automatic scale adjustment for most robust \texttt{float16} training.
The appropriate solver variant is automatically selected based on the precision format and loss scaler type, removing the burden of manual selection from users while ensuring optimal performance for each configuration.

Although the package is primarily designed to support the development of mixed precision algorithms for neural ODE, our implementation is optimized to be deployed on recent GPUs with tensor compute units and we tested the code on NVIDIA RTX A6000 GPUs.
On our hardware, we find that the performance gains can be significant when the problem sizes are adequate for tensor cores and sufficiently large.

To verify the correctness of our code, we include a suite of unit tests. Using a linear ODE system, we test the approximation of solution and gradients. Using a nonlinear ODE, we test the correctness of our manual backward propagation.
Using~\cref{ex:test_adjoint_scaling}, we test the effectiveness of our adjoint scaling.
We also implement a regression test using a linear ODE to ensure that future updates to the code maintain some speed-ups for sufficiently large systems on our hardware.

One limitation of our package is that our custom backward pass casts all neural network weights and inputs to the specified low precision rather than using autocast.
This implementation was needed because of the limited interoperability of PyTorch's autocast and autograd features.
This means that all operations, including blacklisted ones in autocast, are performed in low-precision.
To avoid possible discrepancies between forward and backward pass, users need to add manual casting operations around blacklisted operations in their neural network model.
We closely monitor release notes of future versions of PyTorch and will update our code when the support for automatic mixed precision is sufficiently improved.

\section{Numerical Experiments}\label{sec:experiments}

In this section, we demonstrate the effectiveness of our mixed precision training framework for neural ODEs across three learning tasks of increasing computational complexity.
All experiments were conducted on NVIDIA RTX A6000 GPUs, which provide hardware acceleration for both \texttt{float16} and \texttt{bfloat16} arithmetic through tensor compute units.
We implemented our algorithms in PyTorch and provide the open-source \texttt{rampde} package for reproducibility.
We compare our implementation against the widely-used \texttt{torchdiffeq} package~\cite{torchdiffeq} across different floating-point precisions and experiment with different scaling strategies.

\subsection{Continuous Normalizing Flows}
\label{sub:cnf_experiment}

Continuous Normalizing Flows (CNFs) are a class of generative models that compute a transformation between a complex target distribution, $\pi_{\rm data}$, represented by samples and a simple base distribution, $\pi_1$, usually a standard normal through a continuous-time flow~\cite{ChenEtAl2019, GrathwohlEtAl2018}.
Using the change of variables formula, it can be seen that the log-likelihood of a data point $\bfx\sim\pi_{\rm data}$ under the flow is given by
\begin{equation}
    \log\pi_{\rm data}(\bfx) = \log\pi_1(\bfz(1)) -  \ell(1),
\end{equation}
where $\bfz(1)$ and $\ell(1)$ are the terminal state of the neural ODE system with joint state $\bfy(t) = [\bfz(t), \ell(t)]^\top$,
\begin{equation*}
\frac{d\bfy}{dt} = f(t, \bfy(t), \bftheta) = \begin{bmatrix} v(t, \bfz(t), \bftheta) \\ -\text{tr}\left(\frac{\partial v}{\partial \bfz}\right) \end{bmatrix}, \quad t \in (0,1], \quad \bfy(0) = \begin{bmatrix}
\bfx\\ 0
\end{bmatrix}.
\end{equation*}
Here, $\bfz(t) \in \R^d$ tracks the trajectories of the samples,  $\ell(t) \in \R$ computes the volume change, and  the velocity field $v: [0,1] \times \R^d \times \R^p \to \R^d$ is parameterized by a neural network.
To train the network weights, we minimize the expected negative log-likelihood over $\pi_{\rm data}$. To generate new samples, we reverse the flow with terminal state obtained from the standard normal distribution.
Assuming backward stability and sufficiently accurate time integration, the neural ODE enables tractable computations of the inverse of the flow and likelihood estimation. The high accuracy requirements in the ODE solver make CNF an interesting testbed problem for mixed precision training.

We evaluate our mixed precision framework on three standard 2D benchmark datasets (2-spirals~\cite{LangWitbrock1988}, 8-Gaussians, and checkerboard).
To enable a direct comparison with \texttt{torchdiffeq}, we modify the implementation of the two-dimensional toy problems used in~\cite{GrathwohlEtAl2018}.
We use the discretize-then-optimize approach and integrate the augmented ODE using RK4 with 128 fixed timesteps.
We use the Adam optimizer and train for 2000 iterations with batch size 1024 and learning rate 0.01 without weight decay.
We evaluate the performance using validation negative log-likelihood and assess the visual quality of generated samples.
Following \cite{GrathwohlEtAl2018}, we parameterize the vector field using a hypernetwork, that is,
\begin{equation*}
v(t, \bfz, \bftheta) = \frac{1}{w}\sum_{j=1}^{w} \bfU_j(t) \cdot \tanh(\bfW_j(t) \bfz + \bfb_j(t)),
\end{equation*}
where $\{\bfW_j(t), \bfb_j(t), \bfU_j(t)\}_{j=1}^w $  are computed using three fully connected layers with hidden dimension 32, generating $w=128$ sets of time-dependent weights for the velocity field with weights $\bftheta$.
The most significant modification we made to the code is replacing the stochastic trace estimation with a direct implementation of $\text{tr}(\partial v/\partial \bfz)$, because the automatic differentiation did not reliably compute derivatives in the precision set by autocast in our experiments.

\cref{fig:cnf_overview} visualizes the quality of generated samples across different precision formats and scaling strategies.
Both \texttt{torchdiffeq} and \texttt{rampde} produce visually indistinguishable samples to single precision when proper scaling is applied, demonstrating that mixed precision training leads to performance similar to that of single precision.
The quantitative results in \cref{tab:cnf_results} confirm this observation, with validation losses remaining consistent across precisions when scaling is enabled.
One indication that scaling is essential can be seen in the \texttt{torchdiffeq} results at \texttt{float16} where the training fails to reduce the validation loss.
The losses of all models start at approximately 4 for the 2-spirals dataset but for \texttt{torchdiffeq} without gradient scaling, the final loss is around 9.0, compared to 2.7 for the successful experiments.
We observe similar results for \texttt{torchdiffeq} without scaling on the other datasets.
Both gradient scaling and dynamic adjoint scaling effectively stabilize the \texttt{float16} training.

\begin{figure}
    \centering
    \includegraphics[width=.88\textwidth]{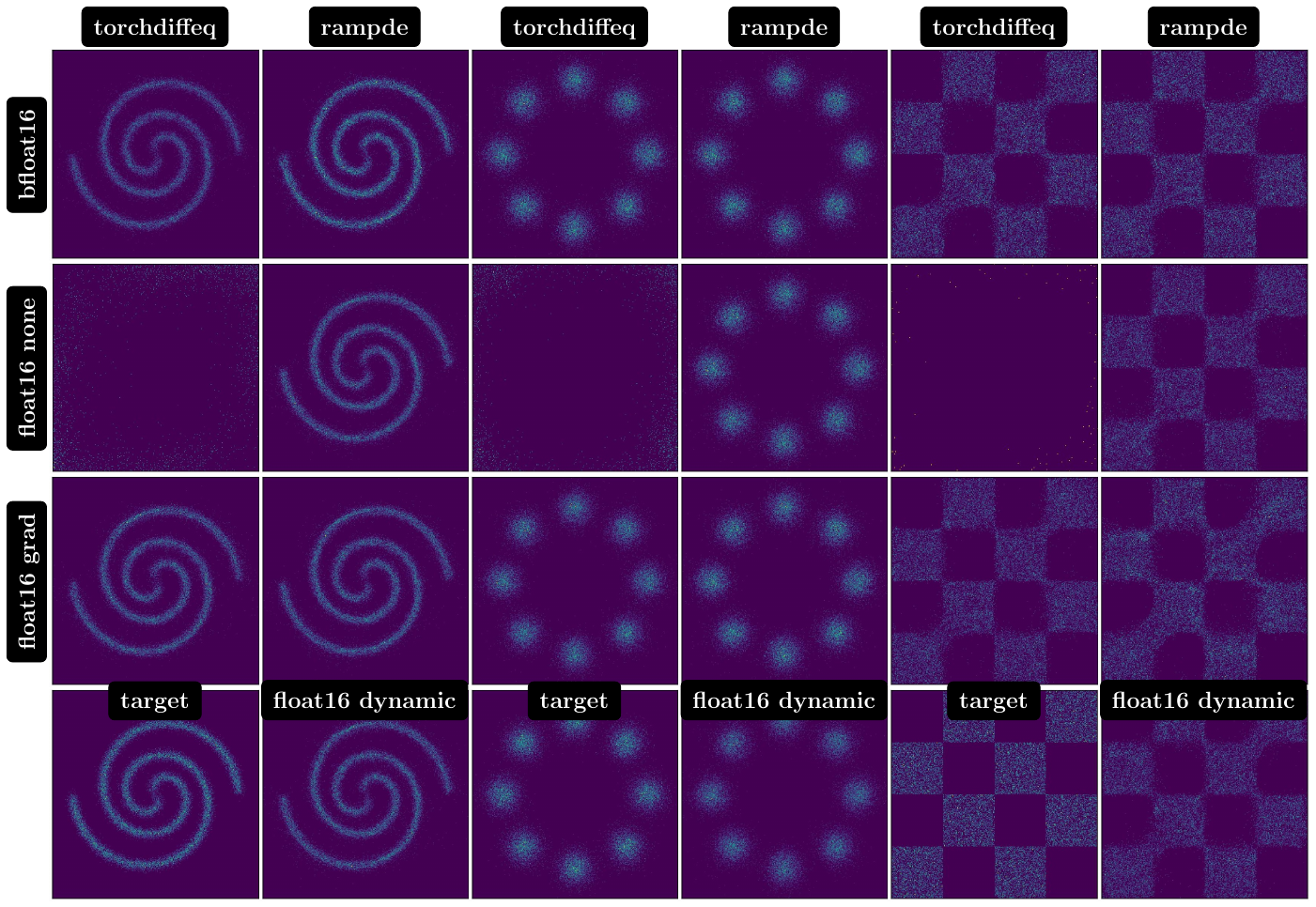}
    \caption{Comparison of continuous normalizing flow (CNF) sample quality in \cref{sub:cnf_experiment} across precision
  formats and ODE solvers on 2D datasets also used in~\cite{GrathwohlEtAl2018}. The columns contain   the datasets: 2-spirals (left pair), 8-Gaussians (center pair), and checkerboard (right pair) datasets. For each dataset, there are two columns, comparing torchdiffeq (left) and rampde (right). Rows demonstrate different precision and scaling configurations. The bottom row shows ground truth targets with \texttt{float16} dynamic scaling, which is only available in rampde. Across these tests, the \texttt{rampde} solver exhibits visually comparable sample quality even without loss scaling.}
    \label{fig:cnf_overview}
\end{figure}
\begin{table}[htbp]
\centering
\caption{Mixed precision training results for Continuous Normalizing Flows (CNFs) on the 2D datasets 2-spirals~\cite{LangWitbrock1988}, 8-Gaussians, and checkerboard. We test \texttt{torchdiffeq} and \texttt{rampde} with network evaluations in \texttt{float32}, \texttt{bfloat16}, and \texttt{float16}. For the latter, we test gradient scaling (Grad), no scaling (None), and dynamic scaling (Dyn). Both solvers achieve comparable results to single precision in \texttt{bfloat16} and \texttt{float16} with proper scaling. For \texttt{float16} without scaling, \texttt{torchdiffeq} performs 2000 iterations without NaN exceptions but the training loss increases. Since these are small-scale tests, we observe no speedup and only slight memory savings compared to single precision. As expected, \texttt{rampde} uses significantly less memory at the cost of more expensive backpropagation.
}
\label{tab:cnf_results}

\resizebox{\textwidth}{!}{%
\begin{tabular}{lccccccccc}
\toprule
& \multicolumn{2}{c}{\texttt{float32}} & \multicolumn{2}{c}{\texttt{bfloat16}} & \multicolumn{5}{c}{\texttt{float16}} \\
& \multicolumn{1}{c}{torchdiffeq} & \multicolumn{1}{c}{rampde} & \multicolumn{1}{c}{torchdiffeq} & \multicolumn{1}{c}{rampde} & \multicolumn{2}{c}{torchdiffeq} & \multicolumn{3}{c}{rampde} \\
Metric & & & & & Grad & None & Grad & None & Dyn \\
\midrule
\textbf{2-spirals} & & & & & & & & & \\
\addlinespace[0.5ex]
\quad Val Loss & 2.671 & 2.670 & 2.667 & 2.674 & 2.666 & 9.065 & 2.679 & 2.664 & 2.673 \\
\quad Avg Fwd Time (s) & 0.41 & 0.27 & 0.47 & 0.32 & 0.48 & 0.49 & 0.33 & 0.33 & 0.33 \\
\quad Avg Bwd Time (s) & 0.60 & 0.88 & 0.68 & 1.03 & 0.70 & 0.70 & 1.16 & 1.15 & 1.68 \\
\quad Total Time (s) & 2026.2 & 2312.0 & 2304.1 & 2694.3 & 2362.9 & 2377.7 & 2980.1 & 2961.7 & 4009.8 \\
\quad Max Memory & 1.3GB & 35.3MB & 919.5MB & 29.6MB & 919.5MB & 919.5MB & 29.7MB & 29.7MB & 29.5MB \\
\addlinespace
\textbf{8-Gaussians} & & & & & & & & & \\
\addlinespace[0.5ex]
\quad Val Loss & 2.805 & 2.882 & 2.888 & 2.859 & 2.831 & 9.448 & 2.840 & 2.828 & 2.879 \\
\quad Avg Fwd Time (s) & 0.42 & 0.28 & 0.48 & 0.33 & 0.49 & 0.49 & 0.32 & 0.33 & 0.33 \\
\quad Avg Bwd Time (s) & 0.61 & 0.89 & 0.69 & 1.04 & 0.72 & 0.71 & 1.15 & 1.16 & 1.71 \\
\quad Total Time (s) & 2047.5 & 2329.2 & 2341.1 & 2747.4 & 2407.5 & 2389.6 & 2949.1 & 2985.9 & 4082.5 \\
\quad Max Memory & 1.3GB & 35.3MB & 919.5MB & 29.6MB & 919.5MB & 919.5MB & 29.7MB & 29.7MB & 29.5MB \\
\addlinespace
\textbf{checkerboard} & & & & & & & & & \\
\addlinespace[0.5ex]
\quad Val Loss & 3.565 & 3.592 & 3.565 & 3.560 & 3.549 & 12.366 & 3.604 & 3.566 & 3.573 \\
\quad Avg Fwd Time (s) & 0.41 & 0.27 & 0.48 & 0.33 & 0.48 & 0.49 & 0.33 & 0.33 & 0.33 \\
\quad Avg Bwd Time (s) & 0.61 & 0.89 & 0.69 & 1.05 & 0.71 & 0.71 & 1.15 & 1.17 & 1.70 \\
\quad Total Time (s) & 2038.6 & 2328.9 & 2336.2 & 2755.6 & 2384.2 & 2399.6 & 2958.0 & 3005.6 & 4068.0 \\
\quad Max Memory & 1.3GB & 35.3MB & 919.5MB & 29.6MB & 919.5MB & 919.5MB & 29.7MB & 29.7MB & 29.5MB \\
\bottomrule
\end{tabular}%
}
\end{table}

The two ODE solver implementations differ in their computational cost due to their different memory-compute tradeoffs.
While \texttt{torchdiffeq} stores all intermediate states and computational graphs during the forward pass, \texttt{rampde} saves only the states and recomputes the ODE dynamics during backpropagation, which lowers memory usage at the cost of additional computation.
This results in a large reduction of memory usage  (1.3GB for \texttt{torchdiffeq} to 35MB for \texttt{rampde} in \texttt{float32}) while increasing the backward pass time by approximately 50\% for \texttt{rampde}.
For these small-scale 2D problems, reducing the precision did not yield wall-clock speedups. This is probably due to the small problem size, which leads to poor utilization of the hardware's tensor cores and the overhead of precision conversions.

Next, we illustrate the behavior of roundoff errors in both approaches to empirically support our theoretical analysis in~\cref{sec:roundoff}.
To this end, we use a network trained using \texttt{rampde} in \texttt{float32} using the 2 spirals dataset and evaluate the relative errors of state and gradients to double precision as the number of time steps increases.
In Figure~\ref{fig:cnf_roundoff}, we show the relative errors for \texttt{float16} with proper scaling, that is, gradient scaling for \texttt{torchdiffeq} and dynamic adjoint scaling for \texttt{rampde}.
It can be seen that both approaches accurately approximate the terminal state and the initial adjoint, which is the derivative for the initial value.
For \texttt{torchdiffeq} we notice a slight positive trend in the errors of the weight gradients in this example. As expected from our theory, the error in the weight gradients is comparable for the different step sizes in \texttt{rampde}.

\begin{figure}
\centering
\includegraphics[width=.9\textwidth]{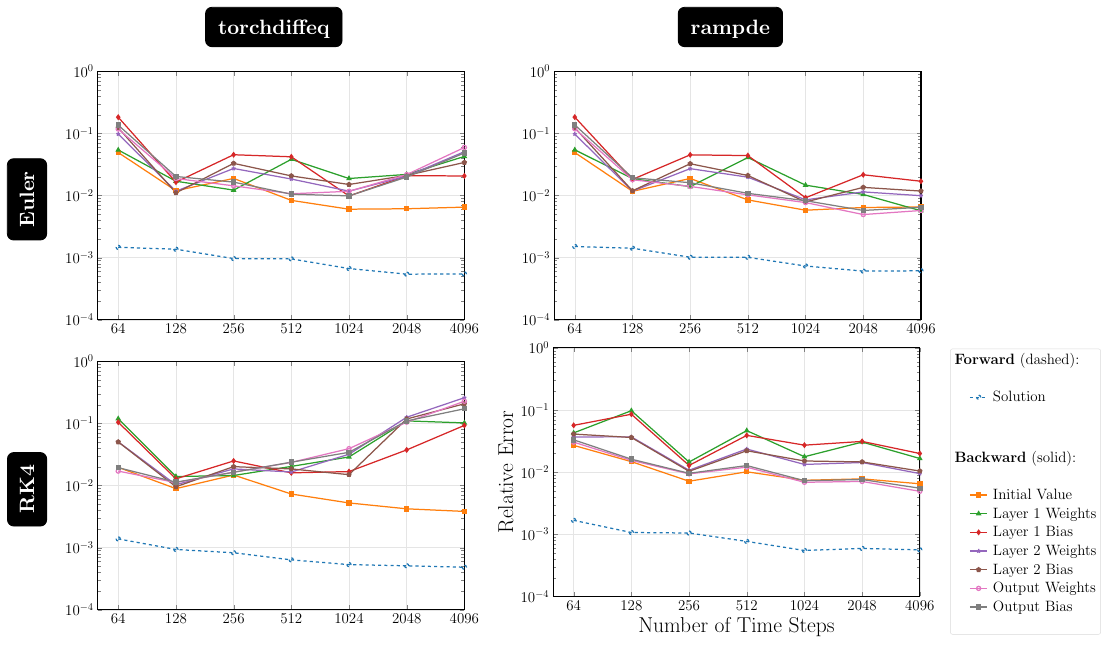}
\caption{Roundoff error experiment for the 2D Continuous Normalizing Flow
  training using the 2-spirals dataset in \cref{sub:cnf_experiment}. We compare the accuracy of forward (dashed) and backward propagation (solid) in \texttt{float16} for
    \texttt{torchdiffeq} (left) and \texttt{rampde} (right)
  for forward Euler (top) and RK4 (bottom) as the number of time steps increases.
  We measure the relative error of each quantity specified in the legend against the \texttt{float64} computation.
The accuracy of the solution and gradients for the initial value is consistent for \texttt{torchdiffeq}, while the gradients of the neural network weights become slightly less accurate as $N$ grows.
Consistent with our analysis in~\cref{sec:roundoff}, the relative errors for \texttt{rampde} remain stable. }
\label{fig:cnf_roundoff}
\end{figure}

\subsection{Optimal Transport Flows}
\label{sub:otflow_experiment}

We test the impact of mixed precision training on a CNF example with optimal transport regularization using the BSDS300 dataset~\cite{MartinEtAl2001BSDS300} following the experimental setup in our previous work~\cite{OnkenEtAl2020OTFlow}.
The dataset is a 63-dimensional feature representation derived from natural image patches in the original Berkeley segmentation dataset.
The dataset size makes it an attractive target for mixed precision training because of the large memory and computational costs and because, when adding the time to the ODE state as done in OT-Flow, the network is evaluated with 64-dimensional inputs, which increases tensor core utilization.

Unlike standard CNFs, OT Flows adds transport costs to regularize the training objective and uses the underlying optimality condition to express the optimal velocity as the negative gradient of the value function.
The training objective reads
\begin{equation}
    \mathcal{L}(\bftheta) = \mathbb{E}_{\bfx\sim\pi_{\rm data}} \left[\alpha_1 \left(\frac12\| \bfz(1) \|^2 - \ell(1)\right) +  c_{\rm L}(1) + \alpha_2 c_{\rm HJB}(1)\right]
\end{equation}
where $\bfz(1), \ell(1), c_{\rm L}(1), c_{\rm HJB}(1)$
track the trajectory of the sample, the volume change, the transport costs, and the violation of the Hamilton-Jacobi-Bellman (HJB) equation, respectively.
They are the terminal states of the augmented neural ODE with states $\bfy(t) = [\bfz(t), \ell(t), c_L(t), c_{\rm HJB}(t)]^\top$  and dynamics
\begin{equation*}
\frac{d\bfy}{dt} = f_{\rm OT}(t, \bfy(t), \bftheta) =
\begin{bmatrix}
    - \nabla V(t, \bfy(t),\bftheta) \\ \Delta V(t,\bfy(t),\bftheta) \\ \frac12 \|\nabla V(t,\bfy(t),\bftheta)\|^2 \\
    \left| \frac{\partial V}{\partial t}(t,\bfy(t),\bftheta) + \frac{1}{2}\|\nabla V(t,\bfy(t),\bftheta)\|^2\right|
\end{bmatrix}, \quad \bfy(0) =
    \begin{bmatrix}
    \bfx \\ 0 \\ 0 \\ 0
    \end{bmatrix}.
\end{equation*}
The OT-Flow approach parameterizes the value function $V: [0,1]\times \R^d \times \R^p \to \R$ and computes its gradient and Laplacian analytically.
For this experiment, we use a wider version of the architecture as in~\cite{OnkenEtAl2020OTFlow}, which consists of an opening layer with 64 inputs (concatenating $\bfz(t)$ and $t$) and 1024 outputs, followed by one residual layer and one affine layer that maps the 1024 outputs to a scalar.
The activation function is the antiderivative of the $\tanh$ function, which we compute in single precision.
As in~\cite{OnkenEtAl2020OTFlow}, we use $\alpha_1=800, \alpha_2=2000$.
Sample quality is evaluated using the Maximum Mean Discrepancy (MMD) metric in addition to the individual loss components.
The ODE is integrated using RK4 with 16 equidistant time steps during both training and 30 during evaluation.
We train for up to 10,000 iterations with batch size 512, employing the same stopping criteria as in the original OT-Flow experiments to enable early stopping when the validation loss stops improving.

Table~\ref{tab:otflowlarge_results} presents the results on the BSDS300 dataset using a set-up similar to that of the CNF experiment.
Our baseline in this experiment is TensorFloat-32 (\texttt{tfloat32}), which is the default \texttt{float32} mode on modern NVIDIA GPUs (Ampere and newer). Unlike pure \texttt{float32}, \texttt{tfloat32} internally uses mixed precision within tensor cores. Specifically, it rounds \texttt{float32} inputs to 10-bit mantissas while maintaining the 8-bit exponent range, effectively performing computations in a reduced precision before accumulating in \texttt{float32}.
This hardware-level mixed precision provides speed benefits while maintaining the programming interface of \texttt{float32}.
In this experiment, \texttt{float16} training fails to converge without gradient scaling due to NaN in the gradients in the beginning of training.
This indicates that the limited range of \texttt{float16} requires careful scaling or hyperparameter tuning to prevent gradient underflow.
With appropriate scaling strategies, all precision configurations achieve comparable validation metrics: negative log-likelihoods range from -159 to -167, transport costs remain bounded between 2.8 and 3.3, and HJB penalties stay close to 1.0, indicating that the mixed precision schemes perform comparably to single precision.
The Maximum Mean Discrepancy (MMD) values, which measure sample quality, are comparable ($\in [0.02, 0.05]$) in all successful configurations, indicating a comparable match between the data and the generated distributions across the precisions.

\begin{table}[t]
\centering
\caption{
    Mixed precision training results for optimal transport flows (OT-Flow) on the 63-dimensional BSDS300 feature dataset.
    We test torchdiffeq and rampde across \texttt{tfloat32}, \texttt{bfloat16}, and \texttt{float16} precisions and for the latter use different scaling strategies (Grad=gradient scaling, None=no scaling, Dyn=dynamic
  adjoint scaling).
  Both solvers fail to converge in
   \texttt{float16} without gradient scaling (indicated by "---").
   With proper scaling, all configurations achieve similar validation metrics but vary in computational costs.
   By avoiding re-computations in the backward pass, torchdiffeq generally is faster but requires more memory.
   In contrast, rampde achieves significant memory savings at some additional costs during the backward pass.
   Both schemes see slight speedups in 16-bit precisions compared to \texttt{tfloat32}.
}
\label{tab:otflowlarge_results}
\resizebox{\textwidth}{!}{%
\begin{tabular}{lccccccccc}
\toprule
& \multicolumn{2}{c}{\texttt{tfloat32}} & \multicolumn{2}{c}{\texttt{bfloat16}} & \multicolumn{5}{c}{\texttt{float16}} \\
& \multicolumn{1}{c}{torchdiffeq} & \multicolumn{1}{c}{rampde} & \multicolumn{1}{c}{torchdiffeq} & \multicolumn{1}{c}{rampde} & \multicolumn{2}{c}{torchdiffeq} & \multicolumn{3}{c}{rampde} \\
Metric & & & & & Grad & None & Grad & None & Dyn \\
\midrule
\quad Val Loss & -128767.7 & -124379.1 & -129675.6 & -128714.9 & -131398.6 & --- & -131309.3 & --- & -129021.0 \\
\quad Val L & 3.001 & 3.292 & 2.986 & 2.871 & 3.041 & --- & 3.059 & --- & 3.114 \\
\quad Val NLL & -163.7 & -159.0 & -164.8 & -163.4 & -167.0 & --- & -166.9 & --- & -164.0 \\
\quad Val HJB & 1.079 & 1.416 & 1.061 & 1.000 & 1.093 & --- & 1.098 & --- & 1.086 \\
\quad Val MMD & 0.0332 & 0.0237 & 0.0285 & 0.0469 & 0.0285 & --- & 0.0328 & --- & 0.0276 \\
\quad Avg Fwd Time (s) & 0.246 & 0.195 & 0.211 & 0.163 & 0.212 & --- & 0.161 & --- & 0.162 \\
\quad Avg Bwd Time (s) & 0.374 & 0.613 & 0.310 & 0.515 & 0.310 & --- & 0.537 & --- & 0.780 \\
\quad Avg Time per Step (s) & 0.620 & 0.808 & 0.521 & 0.677 & 0.521 & --- & 0.698 & --- & 0.942 \\
\quad Total Time (s) & 6016.0 & 5898.5 & 5160.7 & 6569.3 & 5005.2 & --- & 6701.3 & --- & 9140.8 \\
\quad Total Iterations & 9,700 & 7,300 & 9,900 & 9,700 & 9,600 & --- & 9,600 & --- & 9,700 \\
\quad Max Memory (GB) & 18.9 & 2.0 & 11.1 & 1.5 & 11.1 & --- & 1.5 & --- & 1.5 \\
\bottomrule
\end{tabular}%
}
\end{table}

Compared to the small-scale CNF experiment, the memory savings of the mixed precision training for each scheme are more pronounced (almost 50\% savings for \texttt{torchdiffeq} and 25\% savings for \texttt{rampde}).
As before, \texttt{rampde} significantly reduces the memory footprint in each configuration.
We observe a modest speedup in the \texttt{float16} and \texttt{bfloat16} configurations compared to \texttt{tfloat32}, especially in the average time of the forward passes.
As expected, \texttt{rampde}'s backward pass takes longer than in \texttt{torchdiffeq} due to the recomputation of the dynamics.
The overall training time varies slightly due to the early stopping.

This experiment demonstrates that mixed precision training can be attractive for high-dimensional optimal transport problems, with the choice of precision and solver depending on the specific computational constraints.
For memory-limited scenarios typical in large-scale generative modeling, \texttt{rampde} with \texttt{bfloat16} offers an excellent balance, providing $10 \times$ memory reduction with only modest speed penalties.
When training speed is paramount and memory is available, \texttt{torchdiffeq} with \texttt{bfloat16} delivers the fastest convergence.

\subsection{STL-10 Image Classification}
\label{sub:stl10_experiment}

To evaluate the potential of our mixed precision scheme to reduce memory consumption and improve training time on a large-scale learning problem, we apply it to train a  Neural ODE-based convolutional neural network to classify the STL-10 dataset.
Rather than improving the state-of-the-art accuracy, we focus on the computational aspects of this problem and the comparison of our algorithm to \texttt{torchdiffeq} in different precisions.

The STL-10 dataset contains 13,000 labeled natural images divided into 10 classes and is derived from the larger ImageNet dataset. We split the 5,000 training images into 4,000 training and 1,000 validation images, and evaluate on the full set of 8,000 test images. Each image has three color channels for RGB and is $96\times 96$ pixels.

In order to generalize effectively from only 500 labeled training images per class, we use data augmentation during training.
We upscale the images to $128\times128$ to use the tensor cores efficiently, and we use a random resize crop with a uniform scale sampled from $[0.5, 1.0]$ and aspect ratio from $[0.75, 1.33]$.
We then apply a random horizontal flip (probability 0.5) and apply a color jitter to manipulate the brightness, contrast, saturation, and hue with respective maximum adjustments of 0.4, 0.4, 0.4, and 0.1.
To encourage the model to focus more on shapes, texture, and edges rather than relying on color information, we convert the image to grayscale with 0.1 probability.
Finally, we normalize the image to zero mean and unit variance using the per-channel mean and standard deviation computed from the training images.
For the validation and test images, we only perform center cropping to $128\times 128$, tensor conversion, and normalization.

We use a version of the parabolic CNN architecture proposed in~\cite{RuthottoHaber2020} with about 12.6M trainable weights.
This architecture provides a particularly challenging test case for mixed precision training because its hyperparameter choice is less well understood and it is critical to avoid precision-specific hyperparameter optimization.
The architecture consists of several blocks, three of which are defined as neural ODEs.
The first block is an opening layer that maps the RGB images to 128 channels with a $3\times3$ convolution followed by instance normalization and ReLU activation.
The output of the opening layer is used as the initial state of the first neural ODE block governed by the dynamics
\begin{equation}
    \label{eq:NODE_STL10}
    \frac{d}{dt} \bfy(t) = - \bfW(t)^\top \mathcal{N}\left(\sigma\left( \bfW(t) \bfy(t) + \bfb(t)\right)\right), \quad t \in (0,1].
\end{equation}
Here, $\bfb$ is a bias,  $\bfW$ is a convolution operator with $128$ input and output channels and $3\times 3$ stencils, $\mathcal{N}$ denotes the instance normalization, and $\sigma$ is the ReLU activation.
We discretize the neural ODE with four equidistant RK4 steps and model the weights and biases as piecewise constants on each time interval.
The terminal state of the neural ODE block is then fed into a connecting block that consists of a $3\times 3$ convolution that doubles the number of channels, instance normalization, ReLU activation, and an average pooling layer to halve the number of pixels in each direction.
This is followed by another neural ODE block of the form~\eqref{eq:NODE_STL10} but with twice the number of channels, followed by another connector, and a final neural ODE block with 512 channels and images of size $32\times 32$.
Finally, the feature channels are global averaged to reduce sensitivity to translations before the final fully connected layer. The loss function is a softmax cross-entropy loss.

We investigate whether evaluating the forward pass and backward pass in 16-bit precisions using PyTorch's autocast context can lead to similar performance to single precision training at reduced computational cost.
For \texttt{rampde}, this means that the three neural ODE blocks are evaluated with our custom mixed precision handling, while the opening and connecting blocks use the default whitelisted and blacklisted operators.
We use the same strategy to evaluate the model when testing our rampde package and the baseline, torchdiffeq.

We compare \texttt{tfloat32}, \texttt{bfloat16}, and \texttt{float16} floating-point precisions and for the latter we experiment with gradient scaling, dynamic scaling (rampde only) and no scaling. Each configuration is trained with SGD (momentum 0.9, weight decay $5 \times 10^{-4}$) for 160 epochs, batch size 16, initial learning rate 0.05, and a cosine annealing schedule. All experiments use a fixed random seed of 25 for data splitting and model initialization to ensure reproducibility. We calculate the loss and accuracy on the training and validation sets and plot the training loss versus wall-clock time in~\cref{fig:stl10_train_loss}.
After the training is completed, we compute the accuracy and loss on the canonical 8,000-image STL-10 test split.
We summarize all key performance metrics and additional statistics (training time, peak memory) in~\cref{tab:stl10_results}.

\begin{figure}[t]
    \centering
\includegraphics[width=.63\textwidth, trim=0 0 0 15, clip=true]{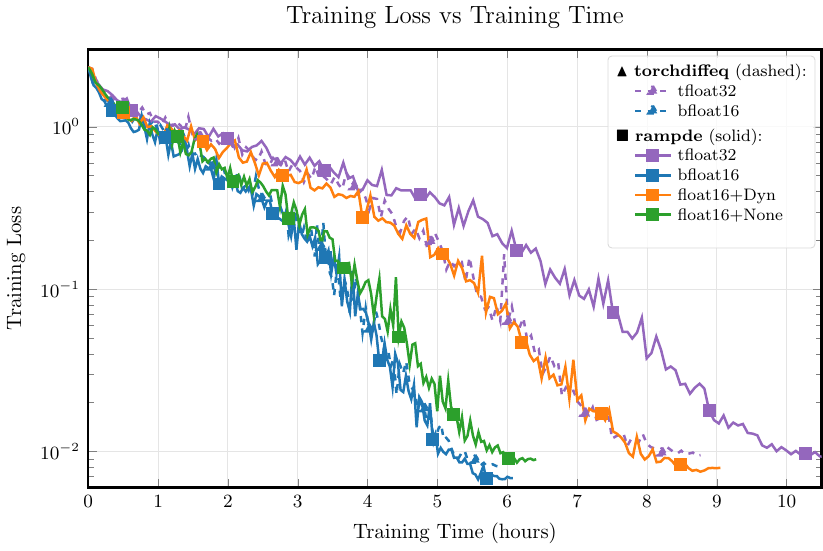}
\caption{Training loss convergence for the STL-10 experiment in \cref{sub:stl10_experiment} comparing torchdiffeq (triangular markers, dashed lines) and rampde (square markers, solid lines) in  different precisions. The plot shows training loss (log scale) versus wall-clock time in hours, demonstrating that mixed precision variants can achieve comparable final loss values more quickly. For clarity, we omit rampde in \texttt{float16} with gradient scaling as its curve is similar to that without scaling. We also omit \texttt{torchdiffeq} at \texttt{float16} as it failed with NaN after a few training steps even with gradient scaling.}
\label{fig:stl10_train_loss}
\end{figure}

\begin{table}[t]
\centering
\caption{
STL10 image classification performance for torchdiffeq and rampde.
 Our rampde implementation achieves similar accuracy in 16-bit than in single precision while reducing the memory footprint and training time by almost a factor of two. With \texttt{float16} precision it works well with gradient scaling (Grad) but achieves slightly better test accuracy with dynamic scaling (Dyn) albeit at some additional cost during backward.  Torchdiffeq fails in \texttt{float16} without loss scaling due to a NaN.
}
\label{tab:stl10_results}

\resizebox{\textwidth}{!}{%
\begin{tabular}{lcccccccccc}
\toprule
& \multicolumn{2}{c}{\texttt{float32}} & \multicolumn{2}{c}{\texttt{tfloat32}} & \multicolumn{2}{c}{\texttt{bfloat16}} & \multicolumn{4}{c}{\texttt{float16}} \\
& \multicolumn{1}{c}{torchdiffeq} & \multicolumn{1}{c}{rampde} & \multicolumn{1}{c}{torchdiffeq} & \multicolumn{1}{c}{rampde} & \multicolumn{1}{c}{torchdiffeq} & \multicolumn{1}{c}{rampde} & torchdiffeq & \multicolumn{3}{c}{rampde} \\
Metric & & & & & & & Grad & Dyn & Grad & None \\
\midrule
\textbf{STL10} &  &  &  &  &  &  &  &  &  &  \\
\addlinespace[0.5ex]
\quad Val Acc & 0.762 & 0.797 & 0.742 & 0.780 & 0.735 & 0.805 & 0.110 & 0.783 & 0.784 & 0.775 \\
\quad Val Loss & 0.878 & 0.752 & 0.934 & 0.765 & 0.914 & 0.667 & 2.326 & 0.790 & 0.759 & 0.824 \\
\quad Test Acc & 0.742 & 0.786 & 0.746 & 0.792 & 0.747 & 0.795 & 0.100 & 0.778 & 0.787 & 0.771 \\
\quad Test Loss & 0.929 & 0.768 & 0.885 & 0.745 & 0.890 & 0.732 & 2.337 & 0.798 & 0.748 & 0.808 \\
\quad Avg Fwd Time (s) & 0.29 & 0.28 & 0.28 & 0.27 & 0.19 & 0.15 & 1.16 & 0.15 & 0.15 & 0.15 \\
\quad Avg Bwd Time (s) & 0.55 & 0.77 & 0.51 & 0.73 & 0.34 & 0.40 & 1.09 & 0.67 & 0.43 & 0.43 \\
\quad Max Memory & 21.5GB & 4.5GB & 21.5GB & 4.3GB & 6.9GB & 2.2GB & 6.9GB & 2.4GB & 2.3GB & 2.3GB \\
\bottomrule
\end{tabular}%
}
\end{table}

Although we used the same random seed for the data split and initialization, we observed that training is not fully deterministic, which is likely due to non-deterministic low-level hardware behavior.

At \texttt{float16}, torchdiffeq fails early on in the training as the loss is beyond the representable range (at iteration 9 without gradient scaling and iteration 31 with scaling).
This demonstrates the need for careful combination of the precisions or more flexible dynamic scaling schemes when training neural ODEs in \texttt{float16}.
In contrast, our rampde implementation successfully trains in \texttt{float16} even without scaling, though dynamic scaling provides the best results.

This experiment indicates that MPT can achieve memory and runtime reduction for large-scale problems while maintaining competitive accuracy.
Comparing the memory usage across implementations, rampde reduces peak memory from 21.5GB to 4.3GB in single precision, approximately a 5$\times$ reduction.
For runtime performance, the fastest configuration overall is torchdiffeq with \texttt{bfloat16} (0.19s forward, 0.34s backward per iteration).
The fastest rampde configuration (\texttt{bfloat16}: 0.15s forward, 0.40s backward) achieves comparable forward pass speed and only 43\% overhead in the backward pass, while using 3× less memory (2.2GB vs 6.9GB).
Within rampde, mixed precision variants provide approximately 1.8× speedup compared to \texttt{tfloat32} (0.27s vs 0.15s forward pass).
Most importantly, all configurations maintain test accuracy above 77\% when proper scaling is used, validating that neural ODEs can be effectively trained in mixed precision without sacrificing model performance or requiring precision-specific hyperparameter tuning.
A step-size stability experiment on one of the trained STL-10 neural ODE blocks, sweeping $h\rho(\bfJ)/2$ across the Dahlquist boundary for the precision formats considered here, is reported in Appendix~\ref{sec:sm_stepsize}.

\section{Discussion}\label{sec:discussion}

We developed and analyzed mixed precision training (MPT) techniques for Neural ODEs that take advantage of low-precision support in modern hardware and improve scalability to larger problem sizes. Neural ODEs present unique challenges for existing MPT frameworks due to the potential accumulation of roundoff errors over many time steps. Following the intuition from roundoff error analysis and existing MPT guidelines, we accumulate in high precision while computing expensive neural network evaluations in low precision. We carefully analyze roundoff errors, provide an efficient implementation, and thoroughly validate our approach. Our results demonstrate that mixed precision neural ODEs can achieve accuracy comparable to single-precision training while offering significant memory reduction and, for adequately sized problems, also reduced training times.

Our MPT approach has three main pillars. First, we designed explicit ODE solvers that maintain states and accumulate in high precision (\texttt{float32}) while evaluating the neural network in low precision (\texttt{float16} or \texttt{bfloat16}) and implemented a custom backward propagation scheme that uses a similar principle for differentiation. Second, we developed a dynamic adjoint scaling scheme that adapts layer-wise scaling ideas to the continuous-time setting and automatically maintains the range during backpropagation. Third, our roundoff error analysis proves that this intuitive approach is theoretically sound in that the relative errors remain $\mathcal{O}(\ulow)$ and do not accumulate uncontrollably with the number of time steps for reasonable step sizes. The CNF experiments empirically confirm that roundoff errors of our scheme behave as predicted by our theoretical analysis while that of our benchmark method, torchdiffeq, shows slight increase of errors for smaller step sizes.

Our experiments demonstrate the potential of mixed precision training for neural ODEs. In the STL-10 image classification task, we achieved almost a $2\times$ speedup and $2\times$ memory reduction using \texttt{float16} and \texttt{bfloat16} precisions compared to \texttt{tfloat32}, while maintaining over 76\% test accuracy across all properly scaled configurations. The OT-Flow experiments on 63-dimensional data showed modest speedups with \texttt{float16} and \texttt{bfloat16} compared to \texttt{tfloat32}, validating the approach for moderately high-dimensional generative modeling tasks.

Based on our theoretical and empirical findings, we offer the following recommendations for using mixed precision neural ODEs. We emphasize that MPT is most beneficial for problems with moderate accuracy requirements where data is inherently noisy and that are sufficiently large such that memory and the computational time of tensor-core friendly operations are limiting factors. In such situations, \texttt{rampde} with \texttt{bfloat16} provides a good accuracy-memory tradeoff and may be sufficient. If more accuracy is required, \texttt{rampde} with gradient or dynamic scaling is worth considering and in our experiments did not require additional hyperparameter tuning that is otherwise common in \texttt{float16}. Finally, MPT can be attractive even if training times are higher when deployment requires low-precision inference, as MPT directly optimizes quantized neural ODEs and can avoid post-training quantization errors.

Our approach has several limitations that suggest directions for future research. Performance gains are hardware-dependent, with best results on GPUs with tensor core support. Our current implementation focuses on explicit solvers, which are standard in deep learning applications where the nonlinearity of neural networks makes implicit methods less attractive.
Several research directions appear promising. Extending to 8-bit quantization would provide extreme compression, though our theoretical analysis suggests this may be challenging with $\mathcal{O}(\ulow) \approx 2^{-3}$ or  $\mathcal{O}(\ulow) \approx 2^{-4}$ depending on the format. Adapting the framework to neural stochastic differential equations (neural SDEs) would broaden applicability to score-based generative models and stochastic control problems. Applying these techniques to optimal control problems and PDE-constrained optimization, where neural ODEs show increasing promise, could enable new applications.

\section*{Acknowledgements}

We thank Deepanshu Verma for his help in investigating different mixed precision training frameworks and many useful discussions.
In accordance with SIAM's editorial policy on AI usage, we describe our use of AI tools in as much detail as possible: We used generative search and ChatGPT's DeepResearch to assist in discovering related literature in the machine learning domain. We used Claude Code with Sonnet 4 and Opus 4 as a coding assistant to: (1) refactor and optimize our prototype implementations to a publishable state enabling reproducibility; (2) expand unit test coverage; (3) help document the code; (4) develop reproducible scripts for generating all figures and tables presented in this paper. These scripts are available in our repository. Additionally, we used LLMs to polish the manuscript text for grammar, clarity, and consistency of mathematical notation.
We take full responsibility for the accuracy and integrity of all content in this paper.

\appendix

\section{Numerical Exploration of Step Size Stability}
\label{sec:sm_stepsize}

We perform a numerical experiment to examine whether
reducing the working precision of the ODE right-hand side narrows the
admissible step-size range of an explicit integrator applied to one of
the trained neural ODE blocks used in the STL-10 classifier of
\cref{sub:stl10_experiment}. The experiment complements
\cref{theo:main_result}, whose bound
$\mathcal{O}(\ulow) + \mathcal{O}(\uhigh/h)$ separates the two
contributions to the accumulated error but does not, on its own, address
the classical Dahlquist stability picture for a given integrator. Our setting is similar to the experiments in~\cite{croci2022mixed} and our observations are consistent
with the order-preservation result of their Theorem~3.6.

\subsection{Test problem}

We use the middle of the three neural ODE blocks in the trained STL-10
network (denoted \texttt{ode2} in our implementation). The block has
$C = 256$ channels and spatial size $64\times 64$, and its weights
$\bfW(t) \in \R^{C \times C \times 3 \times 3}$ and biases
$\bfb(t) \in \R^{C}$ are learned as piecewise-constant functions of $t$
on four intervals; see~\cref{eq:NODE_STL10}. We use the weights from the
first interval so that the right-hand side is autonomous. Writing $\bfK = \bfW(0)$ and
$\mathbf{b} = \bfb(0)$, we compare two right-hand sides:
\begin{align*}
  \text{linearized:} &\quad \frac{d}{dt} \bfy = -\bfK^\top (\bfM \odot \bfK \bfy), \\
  \text{parabolic:}  &\quad \frac{d}{dt} \bfy = -\bfK^\top \sigma(\bfK \bfy + \mathbf{b}),
\end{align*}
with $\sigma = \mathrm{ReLU}$ and the mask
$\bfM = \mathbf{1}\{\bfK \bfy^0 + \mathbf{b} > 0\}$ frozen at the
initial state $\bfy^0$. The Jacobian of the linearized block,
$\bfJ = -\bfK^\top \diag(\bfM)\bfK$, is constant and symmetric negative
semidefinite. The parabolic block has the same symmetric negative
semidefinite structure~\cite[Assumption~3.2]{croci2022mixed} but carries
a state-dependent mask and is therefore nonlinear.

Initial conditions are taken from the STL-10 test set: four images,
resized to $128 \times 128$ and normalized with the training
statistics, are pushed through the trained opening and first-ODE layers
of the network so that $\bfy^0$ follows the activation distribution
seen at inference. The spectral radius $\rho(\bfJ)$ is estimated by
$50$ power iterations on double-precision Jacobian-vector products,
and the maximum over the four samples is used to set the step size.

\subsection{Integrator and sweep}

We integrate with forward Euler using step size
$h = \alpha \cdot 2/\rho(\bfJ)$ for
$\alpha \in \{0.5, 0.99, 1, 2, 3, 4, 5, 6, 7\}$, covering the interior
of the stability region, its boundary, and the unstable
regime. The number of steps is fixed at $N = 1000$ for both the
forward and backward propagation, so the horizon
$T = N h$ grows linearly with $\alpha$. This choice exposes accumulated
roundoff. We compare six precision modes: \texttt{float64} and
\texttt{float32} reference paths; \texttt{rampde} with the default
\texttt{float32} accumulator and a \texttt{bfloat16} or
\texttt{float16} right-hand side (the latter with dynamic scaling); and
two na{\"i}ve mixed-precision baselines that store the state in
\texttt{bfloat16} or \texttt{float16} without a high-precision
accumulator. We report three diagnostics:
(i)~$\|\bfy^N\|_2 / \|\bfy^0\|_2$, the forward growth;
(ii)~$\angle(\bfg_{\text{mode}}, \bfg_{\text{fp64}})$, the angle in
radians between the adjoint gradient in each mode and the double
precision reference, with
$\bfg = \nabla_{\bfy^0} \tfrac{1}{2}\|\bfy^N\|_2^2$;
(iii)~the relative magnitude error
$\big|\|\bfg_{\text{mode}}\| - \|\bfg_{\text{fp64}}\|\big| /
\|\bfg_{\text{fp64}}\|$.

\begin{figure}[t]
\centering
\includegraphics[width=\textwidth]{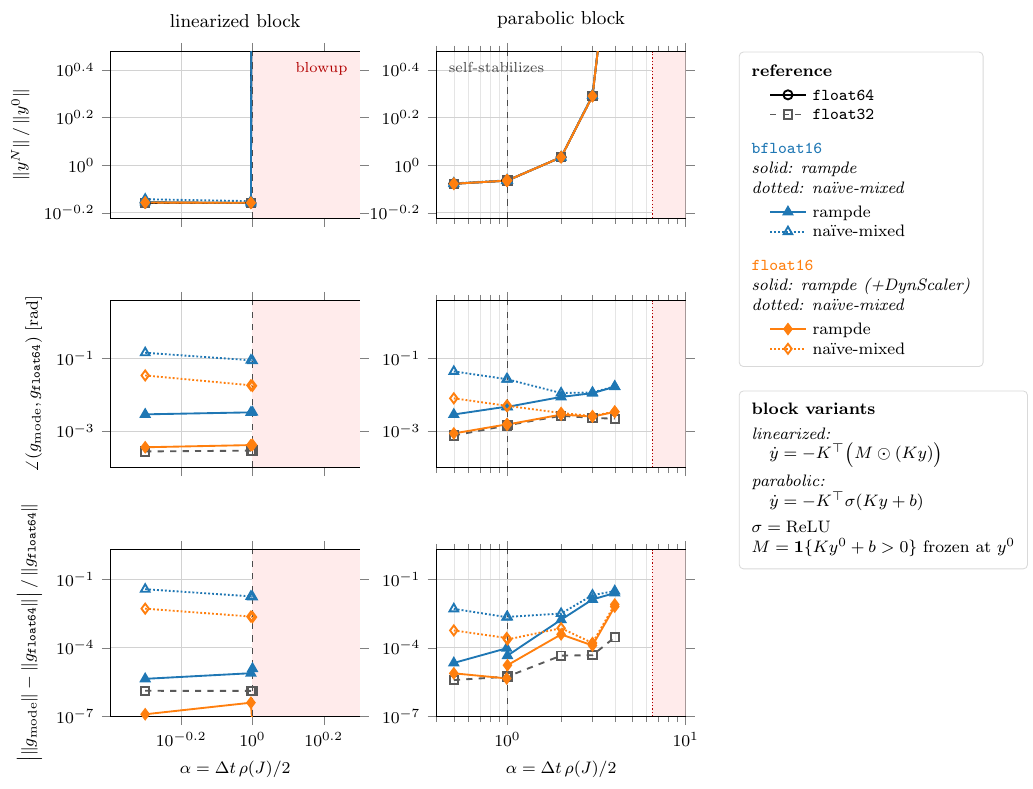}
\caption{Step-size stability diagnostics for the linearized (left) and
parabolic (right) blocks across six precision modes. Top row: forward
growth $\|\bfy^N\|_2/\|\bfy^0\|_2$ versus
$\alpha = h\rho(\bfJ)/2$. Middle row: adjoint direction error in
radians. Bottom row: relative adjoint magnitude error. Shaded regions
mark the empirical blow-up regime; the dashed vertical line in the
left column marks the Dahlquist boundary $\alpha = 1$. Solid lines
denote \texttt{rampde} with a \texttt{float32} accumulator; dotted
lines denote na{\"i}ve mixed-precision without the accumulator.}
\label{fig:sm_stability}
\end{figure}

\subsection{Observations}

\paragraph{Linearized block.}
The empirical forward blow-up boundary coincides with the Dahlquist
prediction $\alpha = 1$ for every precision considered. Inside the
stable range, $\alpha \le 0.99$, all six precisions saturate at the
structural floor $\|\bfP_{\ker(\bfJ)} \bfy^0\| / \|\bfy^0\| \approx
0.69$ set by the kernel of $\bfJ$; the differences between modes at
$\alpha = 0.5$ are below $4\%$ of this floor. The adjoint direction
errors at $\alpha = 0.5$ are summarized in~\cref{tab:sm_adjoint_angle}.
The \texttt{rampde} scheme, which evaluates the right-hand side in low
precision but accumulates the state in \texttt{float32}, produces
direction errors one to two orders of magnitude below the corresponding
na{\"i}ve mixed-precision baseline at the same mantissa width.

\begin{table}[t]
\centering
\caption{Adjoint direction error
$\angle(\bfg_{\text{mode}}, \bfg_{\text{fp64}})$ in radians at
$\alpha = 0.5$ on the linearized block of
\cref{fig:sm_stability}, read from
\texttt{gradient\_angle\_linearized.csv}.}
\label{tab:sm_adjoint_angle}
\begin{tabular}{@{}lc@{}}
\toprule
mode & $\angle(\bfg_{\text{mode}}, \bfg_{\text{fp64}})$ [rad]\\
\midrule
\texttt{float32}                       & $2.7 \times 10^{-4}$ \\
\texttt{rampde} (\texttt{bfloat16})    & $2.9 \times 10^{-3}$ \\
\texttt{rampde} (\texttt{float16})     & $3.6 \times 10^{-4}$ \\
na{\"i}ve mixed (\texttt{bfloat16})        & $1.5 \times 10^{-1}$ \\
na{\"i}ve mixed (\texttt{float16})         & $3.4 \times 10^{-2}$ \\
\bottomrule
\end{tabular}
\end{table}

\paragraph{Parabolic block.}
The state-dependent mask stabilizes the dynamics in this example: forward
trajectories remain finite up to $\alpha = 4$ and blow up at
$\alpha = 5$ for every precision, consistent with a fixed point of
$-\bfK^\top \sigma(\bfK \bfy + \mathbf{b}) = 0$. Inside the stable
range, the forward ratio saturates near $0.84$, with all six
precisions agreeing with the reference to $\mathcal{O}(10^{-3})$ at
$\alpha = 0.5$. The adjoint direction errors at $\alpha = 0.5$ are
$\mathcal{O}(10^{-3})$~rad for \texttt{rampde} (\texttt{bfloat16}),
$\mathcal{O}(10^{-4})$~rad for \texttt{rampde} (\texttt{float16}),
and $\mathcal{O}(10^{-2})$~rad for the na{\"i}ve mixed-precision
baselines; the accumulator advantage shrinks as $\alpha$ increases
and dissipation dominates, so that at $\alpha = 3$ the \texttt{bfloat16}
modes agree to $\mathcal{O}(10^{-3})$~rad. The relative adjoint
magnitude errors follow the same ordering and remain below
$5 \times 10^{-3}$ in all stable precisions.

\subsection{Interpretation}

The admissible step-size range is given by the Dahlquist boundary in
the linearized case and by the self-stabilization of the nonlinear mask
in the parabolic case, and we do not observe any narrowing of that
range as the working precision of the right-hand side is reduced to
\texttt{bfloat16} or \texttt{float16}. The observed ordering of adjoint
direction errors -- \texttt{float32}-accumulator schemes close to \texttt{float64}, na{\"i}ve
mixed-precision one to two orders looser -- is consistent with the
order-preservation result in~\cite[Theorem~3.6]{croci2022mixed} in the symmetric negative
semidefinite regime where it applies; we do not claim to prove such a
result here. In the present experiment, this regime corresponds to the
linearized block and, by virtue of the frozen mask,
approximately to the parabolic block away from the self-stabilization
boundary.

\bibliographystyle{siamplain}
\bibliography{main}

@article{JiaEtAl2018,
  year     = {2018},
  title    = {{Highly Scalable Deep Learning Training System with Mixed-Precision: Training ImageNet in Four Minutes}},
  author   = {Jia, Xianyan and Song, Shutao and He, Wei and Wang, Yangzihao and Rong, Haidong and Zhou, Feihu and Xie, Liqiang and Guo, Zhenyu and Yang, Yuanzhou and Yu, Liwei and Chen, Tiegang and Hu, Guangxiao and Shi, Shaohuai and Chu, Xiaowen},
  journal  = {arXiv},
  doi      = {10.48550/arxiv.1807.11205},
  eprint   = {1807.11205},
  abstract = {{Synchronized stochastic gradient descent (SGD) optimizers with data parallelism are widely used in training large-scale deep neural networks. Although using larger mini-batch sizes can improve the system scalability by reducing the communication-to-computation ratio, it may hurt the generalization ability of the models. To this end, we build a highly scalable deep learning training system for dense GPU clusters with three main contributions: (1) We propose a mixed-precision training method that significantly improves the training throughput of a single GPU without losing accuracy. (2) We propose an optimization approach for extremely large mini-batch size (up to 64k) that can train CNN models on the ImageNet dataset without losing accuracy. (3) We propose highly optimized all-reduce algorithms that achieve up to 3x and 11x speedup on AlexNet and ResNet-50 respectively than NCCL-based training on a cluster with 1024 Tesla P40 GPUs. On training ResNet-50 with 90 epochs, the state-of-the-art GPU-based system with 1024 Tesla P100 GPUs spent 15 minutes and achieved 74.9\textbackslash\% top-1 test accuracy, and another KNL-based system with 2048 Intel KNLs spent 20 minutes and achieved 75.4\textbackslash\% accuracy. Our training system can achieve 75.8\textbackslash\% top-1 test accuracy in only 6.6 minutes using 2048 Tesla P40 GPUs. When training AlexNet with 95 epochs, our system can achieve 58.7\textbackslash\% top-1 test accuracy within 4 minutes, which also outperforms all other existing systems.}},
  keywords = {}
}

@article{burnett2021performance-755,
  year     = {2021},
  title    = {Performance Evaluation of Mixed-Precision {R}unge--{K}utta Methods},
  author   = {Burnett, Ben and Gottlieb, Sigal and Grant, Zachary J and Heryudono, Alfa},
  journal  = {{arXiv}},
  doi      = {10.48550/arxiv.2107.03357},
  eprint   = {2107.03357},
  abstract = {Additive Runge-Kutta methods designed for preserving highly accurate solutions in mixed-precision computation were proposed and analyzed in [8]. These specially designed methods use reduced precision or the implicit computations and full precision for the explicit computations. We develop a {FORTRAN} code to solve a nonlinear system of ordinary differential equations using the mixed precision additive Runge-Kutta ({MP}-{ARK}) methods on {IBM} {POWER}9 and Intel x86{\textbackslash} _64 chips. The convergence, accuracy, runtime, and energy consumption of these methods is explored. We show that these {MP}-{ARK} methods efficiently produce accurate solutions with significant reductions in runtime (and by extension energy consumption).}
}

@article{blanchard_mixed_2020,
  title      = {Mixed {Precision} {Block} {Fused} {Multiply}-{Add}: {Error} {Analysis} and {Application} to {GPU} {Tensor} {Cores}},
  volume     = {42},
  issn       = {1064-8275, 1095-7197},
  shorttitle = {Mixed {Precision} {Block} {Fused} {Multiply}-{Add}},
  url        = {https://epubs.siam.org/doi/10.1137/19M1289546},
  doi        = {10.1137/19M1289546},
  abstract   = {Computing units that carry out a fused multiply-add (FMA) operation with matrix arguments, referred to as tensor units by some vendors, have great potential for use in scientific computing. However, these units are inherently mixed precision, and existing rounding error analyses do not support them. We consider a mixed precision block FMA that generalizes both the usual scalar FMA and existing tensor units. We describe how to exploit such a block FMA in the numerical linear algebra kernels of matrix multiplication and LU factorization and give detailed rounding error analyses of both kernels. An important application is to GMRES-based iterative refinement with block FMAs, about which our analysis provides new insight. Our framework is applicable to the tensor core units in the NVIDIA Volta and Turing GPUs. For these we compare matrix multiplication and LU factorization with TC16 and TC32 forms of FMA, which differ in the precision used for the output of the tensor cores. Our experiments on an NVDIA V100 GPU confirm the predictions of the analysis that the TC32 variant is much more accurate than the TC16 one, and they show that the accuracy boost is obtained with almost no performance loss.},
  language   = {en},
  number     = {3},
  urldate    = {2025-05-28},
  journal    = {SIAM Journal on Scientific Computing},
  author     = {Blanchard, Pierre and Higham, Nicholas J. and Lopez, Florent and Mary, Theo and Pranesh, Srikara},
  month      = jan,
  year       = {2020},
  pages      = {C124--C141},
  file       = {PDF:/Users/lruthot/Zotero/storage/VHNNZN68/Blanchard et al. - 2020 - Mixed Precision Block Fused Multiply-Add Error Analysis and Application to GPU Tensor Cores.pdf:application/pdf}
}

@misc{MassaroliEtAl2021,
  title     = {Dissecting {Neural} {ODEs}},
  url       = {http://arxiv.org/abs/2002.08071},
  doi       = {10.48550/arXiv.2002.08071},
  abstract  = {Continuous deep learning architectures have recently re–emerged as Neural Ordinary Differential Equations (Neural ODEs). This inﬁnite–depth approach theoretically bridges the gap between deep learning and dynamical systems, offering a novel perspective. However, deciphering the inner working of these models is still an open challenge, as most applications apply them as generic black–box modules. In this work we “open the box”, further developing the continuous–depth formulation with the aim of clarifying the inﬂuence of several design choices on the underlying dynamics.},
  language  = {en},
  urldate   = {2025-05-28},
  publisher = {arXiv},
  author    = {Massaroli, Stefano and Poli, Michael and Park, Jinkyoo and Yamashita, Atsushi and Asama, Hajime},
  month     = jan,
  year      = {2021},
  note      = {arXiv:2002.08071 [cs]},
  keywords  = {Computer Science - Machine Learning, Computer Science - Neural and Evolutionary Computing, Statistics - Machine Learning},
  file      = {PDF:/Users/lruthot/Zotero/storage/PPJ7P749/Massaroli et al. - 2021 - Dissecting Neural ODEs.pdf:application/pdf}
}

@article{grant_perturbed_2022,
  title    = {Perturbed {Runge}–{Kutta} {Methods} for {Mixed} {Precision} {Applications}},
  volume   = {92},
  issn     = {0885-7474, 1573-7691},
  url      = {https://link.springer.com/10.1007/s10915-022-01801-2},
  doi      = {10.1007/s10915-022-01801-2},
  abstract = {In this work we consider a mixed precision approach to accelerate the implementation of multi-stage methods. We show that Runge–Kutta methods can be designed so that certain costly intermediate computations can be performed as a lower-precision computation without adversely impacting the accuracy of the overall solution. In particular, a properly designed Runge–Kutta method will damp out the errors committed in the initial stages. This is of particular interest when we consider implicit Runge–Kutta methods. In such cases, the implicit computation of the stage values can be considerably faster if the solution can be of lower precision (or, equivalently, have a lower tolerance). We provide a general theoretical additive framework for designing mixed precision Runge–Kutta methods, and use this framework to derive order conditions for such methods. Next, we show how using this approach allows us to leverage low precision computation of the implicit solver while retaining high precision in the overall method. We present the behavior of some mixed-precision implicit Runge–Kutta methods through numerical studies, and demonstrate how the numerical results match with the theoretical framework. This novel mixed-precision implicit Runge–Kutta framework opens the door to the design of many such methods.},
  language = {en},
  number   = {1},
  urldate  = {2025-05-28},
  journal  = {Journal of Scientific Computing},
  author   = {Grant, Zachary J.},
  month    = jul,
  year     = {2022},
  pages    = {6},
  file     = {PDF:/Users/lruthot/Zotero/storage/SDVK9MWJ/Grant - 2022 - Perturbed Runge–Kutta Methods for Mixed Precision Applications.pdf:application/pdf}
}

@book{Higham2002,
  title     = {Accuracy and Stability of Numerical Algorithms},
  author    = {Higham, Nicholas J.},
  year      = {2002},
  edition   = {2nd},
  publisher = {SIAM},
  address   = {Philadelphia, PA},
  isbn      = {978-0-89871-521-7}
}

@article{higham_mixed_2022,
  title     = {Mixed precision algorithms in numerical linear algebra},
  volume    = {31},
  copyright = {http://creativecommons.org/licenses/by/4.0/},
  issn      = {0962-4929, 1474-0508},
  url       = {https://www.cambridge.org/core/product/identifier/S0962492922000022/type/journal_article},
  doi       = {10.1017/S0962492922000022},
  abstract  = {Today’s floating-point arithmetic landscape is broader than ever. While scientific computing has traditionally used single precision and double precision floating-point arithmetics, half precision is increasingly available in hardware and quadruple precision is supported in software. Lower precision arithmetic brings increased speed and reduced communication and energy costs, but it produces results of correspondingly low accuracy. Higher precisions are more expensive but can potentially provide great benefits, even if used sparingly. A variety of mixed precision algorithms have been developed that combine the superior performance of lower precisions with the better accuracy of higher precisions. Some of these algorithms aim to provide results of the same quality as algorithms running in a fixed precision but at a much lower cost; others use a little higher precision to improve the accuracy of an algorithm. This survey treats a broad range of mixed precision algorithms in numerical linear algebra, both direct and iterative, for problems including matrix multiplication, matrix factorization, linear systems, least squares, eigenvalue decomposition, and singular value decomposition. We identify key algorithmic ideas, such as iterative refinement, adapting the precision to the data, and exploiting mixed precision block fused multiply–add operations. We also describe the possible performance benefits and explain what is known about the numerical stability of the algorithms. This survey should be useful to a wide community of researchers and practitioners who wish to develop or benefit from mixed precision numerical linear algebra algorithms.},
  language  = {en},
  urldate   = {2025-05-28},
  journal   = {Acta Numerica},
  author    = {Higham, Nicholas J. and Mary, Theo},
  month     = may,
  year      = {2022},
  pages     = {347--414},
  file      = {PDF:/Users/lruthot/Zotero/storage/E3EM7MC2/Higham and Mary - 2022 - Mixed precision algorithms in numerical linear algebra.pdf:application/pdf}
}

@article{haber_stable_2018,
  title    = {Stable Architectures for Deep Neural Networks},
  volume   = {34},
  issn     = {0266-5611, 1361-6420},
  url      = {http://arxiv.org/abs/1705.03341},
  doi      = {10.1088/1361-6420/aa9a90},
  abstract = {Deep neural networks have become invaluable tools for supervised machine learning, e.g., classiﬁcation of text or images. While often oﬀering superior results over traditional techniques and successfully expressing complicated patterns in data, deep architectures are known to be challenging to design and train such that they generalize well to new data. Critical issues with deep architectures are numerical instabilities in derivative-based learning algorithms commonly called exploding or vanishing gradients. In this paper, we propose new forward propagation techniques inspired by systems of Ordinary Diﬀerential Equations (ODE) that overcome this challenge and lead to well-posed learning problems for arbitrarily deep networks.},
  language = {en},
  number   = {1},
  urldate  = {2025-05-28},
  journal  = {Inverse Problems},
  author   = {Haber, Eldad and Ruthotto, Lars},
  month    = jan,
  year     = {2018},
  note     = {arXiv:1705.03341 [cs]},
  keywords = {Computer Science - Machine Learning, Computer Science - Numerical Analysis, Mathematics - Numerical Analysis, Mathematics - Optimization and Control},
  pages    = {014004},
  annote   = {Comment: 23 pages, 7 figures},
  file     = {PDF:/Users/lruthot/Zotero/storage/8K3UXAVT/Haber and Ruthotto - 2018 - Stable Architectures for Deep Neural Networks.pdf:application/pdf}
}

@article{RuthottoHaber2020,
  year     = {2020},
  rating   = {0},
  title    = {{Deep Neural Networks Motivated by Partial Differential Equations}},
  author   = {Ruthotto, Lars and Haber, Eldad},
  journal  = {Journal of Mathematical Imaging and Vision},
  abstract = {{Partial differential equations (PDEs) are indispensable for modeling many physical phenomena and also commonly used for solving image processing tasks. In the latter area, PDE-based approaches interpret image data as discretizations of multivariate functions and the output of image processing algorithms as solutions to certain PDEs. Posing image processing problems in the infinite dimensional setting provides powerful tools for their analysis and solution. Over the last few decades, the reinterpretation of classical image processing problems through the PDE lens has been creating multiple celebrated approaches that benefit a vast area of tasks including image segmentation, denoising, registration, and reconstruction. In this paper, we establish a new PDE-interpretation of a class of deep convolutional neural networks (CNN) that are commonly used to learn from speech, image, and video data. Our interpretation includes convolution residual neural networks (ResNet), which are among the most promising approaches for tasks such as image classification having improved the state-of-the-art performance in prestigious benchmark challenges. Despite their recent successes, deep ResNets still face some critical challenges associated with their design, immense computational costs and memory requirements, and lack of understanding of their reasoning. Guided by well-established PDE theory, we derive three new ResNet architectures that fall into two new classes: parabolic and hyperbolic CNNs. We demonstrate how PDE theory can provide new insights and algorithms for deep learning and demonstrate the competitiveness of three new CNN architectures using numerical experiments.}}
}

@misc{KalamkarEtAl2019,
  title     = {A {Study} of {BFLOAT16} for {Deep} {Learning} {Training}},
  url       = {http://arxiv.org/abs/1905.12322},
  doi       = {10.48550/arXiv.1905.12322},
  abstract  = {This paper presents the ﬁrst comprehensive empirical study demonstrating the efﬁcacy of the Brain Floating Point (BFLOAT16) half-precision format for Deep Learning training across image classiﬁcation, speech recognition, language modeling, generative networks and industrial recommendation systems. BFLOAT16 is attractive for Deep Learning training for two reasons: the range of values it can represent is the same as that of IEEE 754 ﬂoating-point format (FP32) and conversion to/from FP32 is simple. Maintaining the same range as FP32 is important to ensure that no hyper-parameter tuning is required for convergence; e.g., IEEE 754 compliant half-precision ﬂoating point (FP16) requires hyper-parameter tuning. In this paper, we discuss the ﬂow of tensors and various key operations in mixed precision training, and delve into details of operations, such as the rounding modes for converting FP32 tensors to BFLOAT16. We have implemented a method to emulate BFLOAT16 operations in Tensorﬂow, Caffe2, IntelCaffe, and Neon for our experiments. Our results show that deep learning training using BFLOAT16 tensors achieves the same state-of-the-art (SOTA) results across domains as FP32 tensors in the same number of iterations and with no changes to hyper-parameters.},
  language  = {en},
  urldate   = {2025-05-28},
  publisher = {arXiv},
  author    = {Kalamkar, Dhiraj and Mudigere, Dheevatsa and Mellempudi, Naveen and Das, Dipankar and Banerjee, Kunal and Avancha, Sasikanth and Vooturi, Dharma Teja and Jammalamadaka, Nataraj and Huang, Jianyu and Yuen, Hector and Yang, Jiyan and Park, Jongsoo and Heinecke, Alexander and Georganas, Evangelos and Srinivasan, Sudarshan and Kundu, Abhisek and Smelyanskiy, Misha and Kaul, Bharat and Dubey, Pradeep},
  month     = jun,
  year      = {2019},
  note      = {arXiv:1905.12322 [cs]},
  keywords  = {Computer Science - Machine Learning, Statistics - Machine Learning},
  file      = {PDF:/Users/lruthot/Zotero/storage/3597JX9U/Kalamkar et al. - 2019 - A Study of BFLOAT16 for Deep Learning Training.pdf:application/pdf}
}

@misc{hayford_speeding_2024,
  title     = {Speeding up and reducing memory usage for scientific machine learning via mixed precision},
  url       = {http://arxiv.org/abs/2401.16645},
  doi       = {10.48550/arXiv.2401.16645},
  abstract  = {Scientific machine learning (SciML) has emerged as a versatile approach to address complex computational science and engineering problems. Within this field, physics-informed neural networks (PINNs) and deep operator networks (DeepONets) stand out as the leading techniques for solving partial differential equations by incorporating both physical equations and experimental data. However, training PINNs and DeepONets requires significant computational resources, including long computational times and large amounts of memory. In search of computational efficiency, training neural networks using half precision (float16) rather than the conventional single (float32) or double (float64) precision has gained substantial interest, given the inherent benefits of reduced computational time and memory consumed. However, we find that float16 cannot be applied to SciML methods, because of gradient divergence at the start of training, weight updates going to zero, and the inability to converge to a local minima. To overcome these limitations, we explore mixed precision, which is an approach that combines the float16 and float32 numerical formats to reduce memory usage and increase computational speed. Our experiments showcase that mixed precision training not only substantially decreases training times and memory demands but also maintains model accuracy. We also reinforce our empirical observations with a theoretical analysis. The research has broad implications for SciML in various computational applications.},
  language  = {en},
  urldate   = {2025-05-28},
  publisher = {arXiv},
  author    = {Hayford, Joel and Goldman-Wetzler, Jacob and Wang, Eric and Lu, Lu},
  month     = jan,
  year      = {2024},
  note      = {arXiv:2401.16645 [cs]},
  keywords  = {Computer Science - Machine Learning},
  annote    = {Comment: 25 pages, 7 figures},
  file      = {PDF:/Users/lruthot/Zotero/storage/SKN74729/Hayford et al. - 2024 - Speeding up and reducing memory usage for scientific machine learning via mixed precision.pdf:application/pdf}
}

@misc{Kidger2022,
  title     = {On {Neural} {Differential} {Equations}},
  url       = {http://arxiv.org/abs/2202.02435},
  doi       = {10.48550/arXiv.2202.02435},
  abstract  = {The conjoining of dynamical systems and deep learning has become a topic of great interest. In particular, neural differential equations (NDEs) demonstrate that neural networks and differential equation are two sides of the same coin. Traditional parameterised differential equations are a special case. Many popular neural network architectures, such as residual networks and recurrent networks, are discretisations. NDEs are suitable for tackling generative problems, dynamical systems, and time series (particularly in physics, finance, ...) and are thus of interest to both modern machine learning and traditional mathematical modelling. NDEs offer high-capacity function approximation, strong priors on model space, the ability to handle irregular data, memory efficiency, and a wealth of available theory on both sides. This doctoral thesis provides an in-depth survey of the field. Topics include: neural ordinary differential equations (e.g. for hybrid neural/mechanistic modelling of physical systems); neural controlled differential equations (e.g. for learning functions of irregular time series); and neural stochastic differential equations (e.g. to produce generative models capable of representing complex stochastic dynamics, or sampling from complex high-dimensional distributions). Further topics include: numerical methods for NDEs (e.g. reversible differential equations solvers, backpropagation through differential equations, Brownian reconstruction); symbolic regression for dynamical systems (e.g. via regularised evolution); and deep implicit models (e.g. deep equilibrium models, differentiable optimisation). We anticipate this thesis will be of interest to anyone interested in the marriage of deep learning with dynamical systems, and hope it will provide a useful reference for the current state of the art.},
  language  = {en},
  urldate   = {2025-05-28},
  publisher = {arXiv},
  author    = {Kidger, Patrick},
  month     = feb,
  year      = {2022},
  note      = {arXiv:2202.02435 [cs]},
  keywords  = {Computer Science - Machine Learning, Computer Science - Numerical Analysis, Mathematics - Classical Analysis and ODEs, Mathematics - Dynamical Systems, Mathematics - Numerical Analysis, Statistics - Machine Learning},
  annote    = {Comment: Doctoral thesis, Mathematical Institute, University of Oxford. 231 pages},
  file      = {PDF:/Users/lruthot/Zotero/storage/KCIIU4UQ/Kidger - 2022 - On Neural Differential Equations.pdf:application/pdf}
}

@misc{MicikeviciusEtAl2018,
  title     = {Mixed {Precision} {Training}},
  url       = {http://arxiv.org/abs/1710.03740},
  doi       = {10.48550/arXiv.1710.03740},
  abstract  = {Increasing the size of a neural network typically improves accuracy but also increases the memory and compute requirements for training the model. We introduce methodology for training deep neural networks using half-precision ﬂoating point numbers, without losing model accuracy or having to modify hyperparameters. This nearly halves memory requirements and, on recent GPUs, speeds up arithmetic. Weights, activations, and gradients are stored in IEEE halfprecision format. Since this format has a narrower range than single-precision we propose three techniques for preventing the loss of critical information. Firstly, we recommend maintaining a single-precision copy of weights that accumulates the gradients after each optimizer step (this copy is rounded to half-precision for the forward- and back-propagation). Secondly, we propose loss-scaling to preserve gradient values with small magnitudes. Thirdly, we use half-precision arithmetic that accumulates into single-precision outputs, which are converted to halfprecision before storing to memory. We demonstrate that the proposed methodology works across a wide variety of tasks and modern large scale (exceeding 100 million parameters) model architectures, trained on large datasets.},
  language  = {en},
  urldate   = {2025-05-28},
  publisher = {arXiv},
  author    = {Micikevicius, Paulius and Narang, Sharan and Alben, Jonah and Diamos, Gregory and Elsen, Erich and Garcia, David and Ginsburg, Boris and Houston, Michael and Kuchaiev, Oleksii and Venkatesh, Ganesh and Wu, Hao},
  month     = feb,
  year      = {2018},
  note      = {arXiv:1710.03740 [cs]},
  keywords  = {Computer Science - Artificial Intelligence, Computer Science - Machine Learning, Statistics - Machine Learning},
  annote    = {Comment: Published as a conference paper at ICLR 2018},
  file      = {PDF:/Users/lruthot/Zotero/storage/Q8DM2VG3/Micikevicius et al. - 2018 - Mixed Precision Training.pdf:application/pdf}
}

@misc{ChenEtAl2019,
  title     = {Neural {Ordinary} {Differential} {Equations}},
  url       = {http://arxiv.org/abs/1806.07366},
  doi       = {10.48550/arXiv.1806.07366},
  abstract  = {We introduce a new family of deep neural network models. Instead of specifying a discrete sequence of hidden layers, we parameterize the derivative of the hidden state using a neural network. The output of the network is computed using a blackbox differential equation solver. These continuous-depth models have constant memory cost, adapt their evaluation strategy to each input, and can explicitly trade numerical precision for speed. We demonstrate these properties in continuous-depth residual networks and continuous-time latent variable models. We also construct continuous normalizing ﬂows, a generative model that can train by maximum likelihood, without partitioning or ordering the data dimensions. For training, we show how to scalably backpropagate through any ODE solver, without access to its internal operations. This allows end-to-end training of ODEs within larger models.},
  language  = {en},
  urldate   = {2025-05-28},
  publisher = {arXiv},
  author    = {Chen, Ricky T. Q. and Rubanova, Yulia and Bettencourt, Jesse and Duvenaud, David},
  month     = dec,
  year      = {2019},
  note      = {arXiv:1806.07366 [cs]},
  keywords  = {Computer Science - Artificial Intelligence, Computer Science - Machine Learning, Statistics - Machine Learning},
  file      = {PDF:/Users/lruthot/Zotero/storage/BFAHDZIN/Chen et al. - 2019 - Neural Ordinary Differential Equations.pdf:application/pdf}
}

@article{e_proposal_2017,
  title    = {A {Proposal} on {Machine} {Learning} via {Dynamical} {Systems}},
  volume   = {5},
  issn     = {2194-6701, 2194-671X},
  url      = {http://link.springer.com/10.1007/s40304-017-0103-z},
  doi      = {10.1007/s40304-017-0103-z},
  abstract = {We discuss the idea of using continuous dynamical systems to model general high-dimensional nonlinear functions used in machine learning. We also discuss the connection with deep learning.},
  language = {en},
  number   = {1},
  urldate  = {2025-05-28},
  journal  = {Communications in Mathematics and Statistics},
  author   = {E, Weinan},
  month    = mar,
  year     = {2017},
  pages    = {1--11},
  file     = {PDF:/Users/lruthot/Zotero/storage/S3IWH9WU/E - 2017 - A Proposal on Machine Learning via Dynamical Systems.pdf:application/pdf}
}

@misc{ZhaoEtAl2019,
  title     = {Adaptive {Loss} {Scaling} for {Mixed} {Precision} {Training}},
  url       = {http://arxiv.org/abs/1910.12385},
  doi       = {10.48550/arXiv.1910.12385},
  abstract  = {Mixed precision training (MPT) is becoming a practical technique to improve the speed and energy efﬁciency of training deep neural networks by leveraging the fast hardware support for IEEE half-precision ﬂoating point that is available in existing GPUs. MPT is typically used in combination with a technique called loss scaling, that works by scaling up the loss value up before the start of backpropagation in order to minimize the impact of numerical underﬂow on training. Unfortunately, existing methods make this loss scale value a hyperparameter that needs to be tuned per-model, and a single scale cannot be adapted to different layers at different training stages. We introduce a loss scaling-based training method called adaptive loss scaling that makes MPT easier and more practical to use, by removing the need to tune a model-speciﬁc loss scale hyperparameter. We achieve this by introducing layer-wise loss scale values which are automatically computed during training to deal with underﬂow more effectively than existing methods. We present experimental results on a variety of networks and tasks that show our approach can shorten the time to convergence and improve accuracy compared to the existing state-of-the-art MPT and single-precision ﬂoating point 1.},
  language  = {en},
  urldate   = {2025-05-28},
  publisher = {arXiv},
  author    = {Zhao, Ruizhe and Vogel, Brian and Ahmed, Tanvir},
  month     = oct,
  year      = {2019},
  note      = {arXiv:1910.12385 [cs]},
  keywords  = {Computer Science - Machine Learning, Statistics - Machine Learning},
  file      = {PDF:/Users/lruthot/Zotero/storage/7QJQEDVN/Zhao et al. - 2019 - Adaptive Loss Scaling for Mixed Precision Training.pdf:application/pdf}
}

@article{RICOMARTINEZ1992,
  year     = {1992},
  rating   = {0},
  title    = {{Discrete- vs. Continuous-time Nonlinear Signal Processing of Cu Electrodissolution Data}},
  author   = {Rico-Martínez, R and Krischer, K and Kevrekidis, Ioannis G},
  journal  = {Chemical Engineering Communications},
  doi      = {10.1080/00986449208936084},
  pages    = {25 -- 48},
  number   = {1},
  volume   = {118},
  language = {English},
  keywords = {}
}

@inproceedings{OnkenEtAl2020OTFlow,
  year      = {2021},
  rating    = {0},
  keywords  = {DMS-1751636},
  author    = {Onken, Derek and Fung, Samy Wu and Li, Xingjian and Ruthotto, Lars},
  title     = {{OT-Flow: Fast and Accurate Continuous Normalizing Flows via Optimal Transport}},
  booktitle = {Proceedings of the AAAI Conference on Artificial Intelligence},
  volume    = {35},
  number    = {10},
  pages     = {9223--9232},
  publisher = {AAAI Press},
  url       = {arXiv.org},
  abstract  = {{A normalizing flow is an invertible mapping between an arbitrary probability distribution and a standard normal distribution; it can be used for density estimation and statistical inference. Computing the flow follows the change of variables formula and thus requires invertibility of the mapping and an efficient way to compute the determinant of its Jacobian. To satisfy these requirements, normalizing flows typically consist of carefully chosen components. Continuous normalizing flows (CNFs) are mappings obtained by solving a neural ordinary differential equation (ODE). The neural ODE's dynamics can be chosen almost arbitrarily while ensuring invertibility. Moreover, the log-determinant of the flow's Jacobian can be obtained by integrating the trace of the dynamics' Jacobian along the flow. Our proposed OT-Flow approach tackles two critical computational challenges that limit a more widespread use of CNFs. First, OT-Flow leverages optimal transport (OT) theory to regularize the CNF and enforce straight trajectories that are easier to integrate. Second, OT-Flow features exact trace computation with time complexity equal to trace estimators used in existing CNFs. On five high-dimensional density estimation and generative modeling tasks, OT-Flow performs competitively to a state-of-the-art CNF while on average requiring one-fourth of the number of weights with 17x speedup in training time and 28x speedup in inference.}},
  local-url = {file://localhost/Users/lruthot/Documents/Papers%20Library/Onken-OT-Flow-%20Fast%20and%20Accurate%20Continuous%20Normalizing%20Flows%20via%20Optimal%20Transport-2020-arXiv.org.pdf}
}

@incollection{Finlay:2020wt,
  year      = {2020},
  rating    = {0},
  title     = {{How to Train Your Neural ODE: the World of Jacobian and Kinetic Regularization}},
  author    = {Finlay, Chris and Jacobsen, Joern-Henrik and Nurbekyan, Levon and Oberman, Adam},
  booktitle = {Proceedings of the 37th International Conference on Machine Learning},
  volume    = {119},
  abstract  = {{Training neural ODEs on large datasets has not been tractable due to the necessity of allowing the adaptive numerical ODE solver to refine its step size to very small values. In practice this leads...}},
  pages     = {3154 -- 3164},
  series    = {Proceedings of Machine Learning Research},
  publisher = {PMLR},
  language  = {English},
  keywords  = {},
  month     = {11}
}

@inproceedings{GrathwohlEtAl2018,
  year      = {2019},
  rating    = {0},
  keywords  = {NormalizingFlows},
  title     = {{FFJORD: Free-form Continuous Dynamics for Scalable Reversible Generative Models}},
  author    = {Grathwohl, Will and Chen, Ricky T Q and Bettencourt, Jesse and Sutskever, Ilya and Duvenaud, David},
  booktitle = {International Conference on Learning Representations},
  abstract  = {{A promising class of generative models maps points from a simple distribution to a complex distribution through an invertible neural network. Likelihood-based training of these models requires restricting their architectures to allow cheap computation of Jacobian determinants. Alternatively, the Jacobian trace can be used if the transformation is specified by an ordinary differential equation. In this paper, we use Hutchinson's trace estimator to give a scalable unbiased estimate of the log-density. The result is a continuous-time invertible generative model with unbiased density estimation and one-pass sampling, while allowing unrestricted neural network architectures. We demonstrate our approach on high-dimensional density estimation, image generation, and variational inference, achieving the state-of-the-art among exact likelihood methods with efficient sampling.}},
  month     = {10},
  local-url = {file://localhost/Users/lruthot/Documents/Papers%20Library/816B3159-8A78-4282-BF06-C1E893379436.pdf}
}

@article{RuthottoEtAl2020MFG,
  year      = {2020},
  rating    = {0},
  keywords  = {DMS-1751636},
  title     = {{A machine learning framework for solving high-dimensional mean field game and mean field control problems}},
  author    = {Ruthotto, Lars and Osher, S J and Li, Wuchen and Nurbekyan, Levon and Fung, Samy Wu},
  journal   = {Proceedings of the National Academy of Sciences},
  doi       = {10.1073/pnas.1922204117/-/dcsupplemental},
  abstract  = {{Skip to main content. Submit; About: Editorial Board; PNAS Staff; FAQ; Accessibility Statement; Rights and Permissions; Site Map. Contact; Journal Club; Subscribe: Subscription Rates; Subscriptions FAQ; Open Access; Recommend PNAS to Your Librarian. Log in; Log out; My Cart …}},
  pages     = {9183 -- 9193},
  number    = {17},
  volume    = {117},
  note      = {Supported by NSF DMS 1751636},
  local-url = {file://localhost/Users/lruthot/Documents/Papers%20Library/Ruthotto-A%20machine%20learning%20framework%20for%20solving%20high-dimensional%20mean%20field%20game%20and%20mean%20field%20control%20problems-2020-Proceedings%20of%20the%20National%20Academy%20of%20Sciences.pdf}
}

@article{OnkenEtAl2020ieee,
  year      = {2021},
  rating    = {0},
  title     = {{A Neural Network Approach for High-Dimensional Optimal Control}},
  author    = {Onken, Derek and Nurbekyan, Levon and Li, Xingjian and Fung, Samy Wu and Osher, Stanley and Ruthotto, Lars},
  journal   = {arXiv},
  eprint    = {2104.03270},
  abstract  = {{We propose a neural network approach for solving high-dimensional optimal control problems arising in real-time applications. Our approach yields controls in a feedback form and can therefore handle uncertainties such as perturbations to the system's state. We accomplish this by fusing the Pontryagin Maximum Principle (PMP) and Hamilton-Jacobi-Bellman (HJB) approaches and parameterizing the value function with a neural network. We train our neural network model using the objective function of the control problem and penalty terms that enforce the HJB equations. Therefore, our training algorithm does not involve data generated by another algorithm. By training on a distribution of initial states, we ensure the controls' optimality on a large portion of the state-space. Our grid-free approach scales efficiently to dimensions where grids become impractical or infeasible. We demonstrate the effectiveness of our approach on several multi-agent collision-avoidance problems in up to 150 dimensions. Furthermore, we empirically observe that the number of parameters in our approach scales linearly with the dimension of the control problem, thereby mitigating the curse of dimensionality.}},
  keywords  = {},
  local-url = {file://localhost/Users/lruthot/Documents/Papers%20Library/Onken-A%20Neural%20Network%20Approach%20for%20High-Dimensional%20Optimal%20Control-2021-arXiv.pdf}
}

@article{SongEtAl2020,
  year      = {2020},
  title     = {{Score-Based Generative Modeling through Stochastic Differential Equations}},
  author    = {Song, Yang and Sohl-Dickstein, Jascha and Kingma, Diederik P and Kumar, Abhishek and Ermon, Stefano and Poole, Ben},
  journal   = {arXiv},
  eprint    = {2011.13456},
  abstract  = {{Creating noise from data is easy; creating data from noise is generative modeling. We present a stochastic differential equation (SDE) that smoothly transforms a complex data distribution to a known prior distribution by slowly injecting noise, and a corresponding reverse-time SDE that transforms the prior distribution back into the data distribution by slowly removing the noise. Crucially, the reverse-time SDE depends only on the time-dependent gradient field (\textbackslashaka, score) of the perturbed data distribution. By leveraging advances in score-based generative modeling, we can accurately estimate these scores with neural networks, and use numerical SDE solvers to generate samples. We show that this framework encapsulates previous approaches in score-based generative modeling and diffusion probabilistic modeling, allowing for new sampling procedures and new modeling capabilities. In particular, we introduce a predictor-corrector framework to correct errors in the evolution of the discretized reverse-time SDE. We also derive an equivalent neural ODE that samples from the same distribution as the SDE, but additionally enables exact likelihood computation, and improved sampling efficiency. In addition, we provide a new way to solve inverse problems with score-based models, as demonstrated with experiments on class-conditional generation, image inpainting, and colorization. Combined with multiple architectural improvements, we achieve record-breaking performance for unconditional image generation on CIFAR-10 with an Inception score of 9.89 and FID of 2.20, a competitive likelihood of 2.99 bits/dim, and demonstrate high fidelity generation of 1024 x 1024 images for the first time from a score-based generative model.}},
  keywords  = {},
  local-url = {file://localhost/Users/lruthot/Documents/Papers%20Library/Song-Score-Based%20Generative%20Modeling%20through%20Stochastic%20Differential%20Equations-2020-arXiv_1.pdf}
}

@article{Onken2020DO,
  year     = {2020},
  rating   = {0},
  keywords = {DMS-1751636},
  title    = {{Discretize-Optimize vs. Optimize-Discretize for Time-Series Regression and Continuous Normalizing Flows}},
  author   = {Onken, Derek and Ruthotto, Lars},
  journal  = {arXiv.org},
  url      = {arXiv.org},
  abstract = {{We compare the discretize-optimize (Disc-Opt) and optimize-discretize (Opt-Disc) approaches for time-series regression and continuous normalizing flows using neural ODEs. Neural ODEs, first described in Chen et al. (2018), are ordinary differential equations (ODEs) with neural network components; these models have competitively solved a variety of machine learning applications. Training a neural ODE can be phrased as an optimal control problem where the neural network weights are the controls and the hidden features are the states. Every iteration of gradient-based training involves solving an ODE forward in time and another backward in time, which can require large amounts of computation, time, and memory. Gholami et al. (2019) compared the Opt-Disc and Disc-Opt approaches for neural ODEs arising as continuous limits of residual neural networks used in image classification tasks. Their findings suggest that Disc-Opt achieves preferable performance due to the guaranteed accuracy of gradients. In this paper, we extend this comparison to neural ODEs applied to time-series regression and continuous normalizing flows (CNFs). Time-series regression and CNFs differ from classification in that the actual ODE model is needed in the prediction and inference phase, respectively. Meaningful models must also satisfy additional requirements, e.g., the invertibility of the CNF. As the continuous model satisfies these requirements by design, Opt-Disc approaches may appear advantageous. Through our numerical experiments, we demonstrate that with careful numerical treatment, Disc-Opt methods can achieve similar performance as Opt-Disc at inference with drastically reduced training costs. Disc-Opt reduced costs in six out of seven separate problems with training time reduction ranging from 39\% to 97\%, and in one case, Disc-Opt reduced training from nine days to less than one day.}},
  volume   = {cs.LG},
  month    = {05}
}

@article{Gholami:2019wm,
  year     = {2019},
  rating   = {0},
  title    = {{ANODE: Unconditionally Accurate Memory-Efficient Gradients for Neural ODEs}},
  author   = {Gholami, Amir and Keutzer, Kurt and Biros, George},
  journal  = {arXiv.org},
  eprint   = {1902.10298},
  url      = {arXiv.org},
  abstract = {{Residual neural networks can be viewed as the forward Euler discretization of an Ordinary Differential Equation (ODE) with a unit time step. This has recently motivated researchers to explore other discretization approaches and train ODE based networks. However, an important challenge of neural ODEs is their prohibitive memory cost during gradient backpropogation. Recently a method proposed in [8], claimed that this memory overhead can be reduced from O(LN\_t), where N\_t is the number of time steps, down to O(L) by solving forward ODE backwards in time, where L is the depth of the network. However, we will show that this approach may lead to several problems: (i) it may be numerically unstable for ReLU/non-ReLU activations and general convolution operators, and (ii) the proposed optimize-then-discretize approach may lead to divergent training due to inconsistent gradients for small time step sizes. We discuss the underlying problems, and to address them we propose ANODE, an Adjoint based Neural ODE framework which avoids the numerical instability related problems noted above, and provides unconditionally accurate gradients. ANODE has a memory footprint of O(L) + O(N\_t), with the same computational cost as reversing ODE solve. We furthermore, discuss a memory efficient algorithm which can further reduce this footprint with a trade-off of additional computational cost. We show results on Cifar-10/100 datasets using ResNet and SqueezeNext neural networks.}},
  volume   = {cs.LG},
  keywords = {},
  month    = {02}
}

@misc{torchdiffeq,
  author = {Chen, Ricky T. Q.},
  title  = {torchdiffeq},
  year   = {2018},
  url    = {https://github.com/rtqichen/torchdiffeq}
}

@book{stoer2002introduction,
  title     = {Introduction to Numerical Analysis},
  author    = {Stoer, J. and Bulirsch, R.},
  series    = {Texts in Applied Mathematics},
  edition   = {3},
  year      = {2002},
  publisher = {Springer},
  address   = {New York, NY},
  pages     = {XVI, 746},
  isbn      = {978-0-387-95452-3},
  doi       = {10.1007/978-0-387-21738-3},
  url       = {https://doi.org/10.1007/978-0-387-21738-3}
}

@manual{pytorch_autocast2024,
  title  = {Automatic Mixed Precision package - torch.amp},
  author = {{PyTorch Team}},
  year   = {2024},
  url    = {https://pytorch.org/docs/stable/amp.html},
  note   = {Accessed August 2025}
}

@inproceedings{LangWitbrock1988,
  title     = {Learning to Tell Two Spirals Apart},
  author    = {Lang, Kevin J. and Witbrock, Michael J.},
  booktitle = {Proceedings of the 1988 Connectionist Models Summer School},
  pages     = {52--59},
  year      = {1989},
  publisher = {Morgan Kaufmann},
  address   = {San Mateo, CA}
}

@inproceedings{MartinEtAl2001BSDS300,
  title        = {A Database of Human Segmented Natural Images and its Application to Evaluating Segmentation Algorithms and Measuring Ecological Statistics},
  author       = {Martin, David and Fowlkes, Charless and Tal, Doron and Malik, Jitendra},
  booktitle    = {Proceedings Eighth IEEE International Conference on Computer Vision (ICCV 2001)},
  volume       = {2},
  pages        = {416--423},
  year         = {2001},
  organization = {IEEE},
  address      = {Vancouver, Canada}
}

@article{croci2022mixed,
  title     = {Mixed-precision explicit stabilized Runge--Kutta methods for single- and multi-scale differential equations},
  author    = {Croci, Matteo and de Souza, Giacomo Rosilho},
  journal   = {Journal of Computational Physics},
  volume    = {464},
  pages     = {111349},
  year      = {2022},
  publisher = {Elsevier}
}

\end{document}